\setlist[enumerate]{leftmargin=.5in}
\setlist[itemize]{leftmargin=.5in}
\crefname{hypothesis}{Hypothesis}{Hypotheses}
\crefname{fact}{Fact}{Facts}
\title{Diagonally-Weighted Generalized Method of Moments Estimation \\for Gaussian Mixture Modeling\thanks{Submitted to the editors July 27, 2025.
\funding{The work was supported in part by AFOSR FA9550-20-1-0266 and FA9550-23-1-0249, the Simons Foundation Math+X Investigator Award, NSF DMS 2009753, and NIH/NIGMS R01GM136780-01.}}}
\author{Liu Zhang\thanks{Program in Applied and Computational Mathematics, Princeton University, Princeton, NJ 08540 USA (\email{lz1619@princeton.edu}).}
\and Oscar Mickelin\thanks{Program in Applied and Computational Mathematics, Princeton University, Princeton, NJ 08540 USA (\email{hm6655@princeton.edu}).}
\and Sheng Xu\thanks{Program in Applied and Computational Mathematics, Princeton University, Princeton, NJ 08540 USA (\email{sxu21@princeton.edu}).}
\and Amit Singer\thanks{Department of Mathematics and Program in Applied and Computational Mathematics, Princeton University, Princeton, NJ 08540 USA (\email{amits@math.princeton.edu}).}
}
\DeclareMathOperator{\diag}{diag}
\begin{document}

\nolinenumbers

\hyphenpenalty=10000
\exhyphenpenalty=10000
\tolerance=1000
\emergencystretch=1em

\maketitle

\begin{abstract}
Since Pearson [Philosophical Transactions of the Royal Society of London. A, 185 (1894), pp. 71–110] first applied the method of moments (MM) for modeling data as a mixture of one-dimensional Gaussians, moment‐based estimation methods have proliferated. Among these methods, the generalized method of moments (GMM) improves the statistical efficiency of MM by weighting the moments appropriately. However, the computational complexity and storage complexity of MM and GMM grow exponentially with the dimension, making these methods impractical for high-dimensional data or when higher-order moments are required. Such computational bottlenecks are more severe in GMM since it additionally requires estimating a large weighting matrix. To overcome these bottlenecks, we propose the diagonally-weighted GMM (DGMM), which achieves a balance among statistical efficiency, computational complexity, and numerical stability. We apply DGMM to study the parameter estimation problem for weakly separated heteroscedastic low‐rank Gaussian mixtures and design a computationally efficient and numerically stable algorithm that obtains the DGMM estimator without explicitly computing or storing the moment tensors. We implement the proposed algorithm and empirically validate the advantages of DGMM: in numerical studies, DGMM attains smaller estimation errors while requiring substantially shorter runtime than MM and GMM. The code and data will be available upon publication at https://github.com/liu-lzhang/dgmm.
\end{abstract}

\begin{keywords}
Generalized Method of Moments, Gaussian Mixture Models, Numerical Tensor Algorithms, Subspace Clustering
\end{keywords}

\begin{MSCcodes}
62F12, 62H30, 15A69, 65Y20
\end{MSCcodes}

\section{Introduction}
\label{sec:intro}
\subsection{Background and motivation}
Gaussian Mixtures (GMs) have been extensively studied for modeling high-dimensional data with cluster structures, which are ubiquitous in natural science and social science   \cite{mclachlan2004finite, titterington1985statistical}, image processing \cite{permuter2003gaussian}, and speech processing \cite{reynolds1995robust}. In general, there are three different formulations for learning GMs: parameter estimation, where the goal is to estimate the parameters; density estimation, where the goal is to estimate the probability density function; and clustering, where the goal is to estimate the component class of each sample. \cite{pearson1894contributions} first proposed the method of moments (MM), where the first few moments were used to estimate the parameters of a mixture of two one-dimensional Gaussians. Many moment-based approaches subsequently extended the classical MM, among which the generalized method of moments (GMM) has been one of the most widely studied since \cite{hansen1982large} first established the result on the optimally weighted GMM. For textbook treatments of GMM, see, e.g., \cite{newey1994large, hall2013generalized}. Among the likelihood-based approaches to GM parameter estimation, \cite{dempster1977maximum} proposed the Expectation-Maximization (EM) algorithm, which is a procedure for approximating the Maximum Likelihood Estimator (MLE)\footnote{Prior to \cite{dempster1977maximum}, there were similar works including \cite{newcomb1886generalized, mckendrick1925applications, healy1956missing}, but \cite{dempster1977maximum} was the first to unify the approach.}. In the theoretical computer science literature, \cite{dasgupta1999learning} proposed the first provable polynomial-time algorithm for GM parameter estimation, under the assumption that the centers of the Gaussian components are well-separated, often known as ``the separation condition.'' Since then, many provable algorithms were proposed, aiming to relax this assumption \cite{sanjeev2001learning, vempala2004spectral, brubaker2008isotropic, dasgupta2013two}. Subsequent breakthroughs that eliminated the separation condition were independently proposed by \cite{kalai2010efficiently} for mixtures of two Gaussians, \cite{belkin2010polynomial} for mixtures of $K$ Gaussians with identical spherical covariance, and \cite{moitra2010settling} for mixtures of $K$ arbitrary Gaussians. In particular, the algorithm in \cite{moitra2010settling} is based on the classical MM. This prompted a renewed interest in moment-based methods and inspired many algorithms on the theme of structured decomposition of higher-order moment tensors, e.g., \cite{hsu2013learning, anandkumar2014tensor, bhaskara2014smoothed, anderson2014more} for isotropic GMs, \cite{guo2022learning} for diagonal GMs, and \cite{ge2015learning, bakshi2022robustly, liu2023robustly} for general GMs. Mixture identifiability is another important question that has been studied extensively, from the early works such as \cite{teicher1963identifiability, khatri1968some} to more recent works based on algebraic statistics tools such as  \cite{agostini2021moment, taveira2024gaussian, lindberg2025estimating}. Despite these theoretical advances, deploying moment‐based methods in high-dimensional or higher-order settings remains hampered by computational bottlenecks. \cite{pereira2022tensor} was among the earliest works to address this. Our work builds upon this line of research and focuses on improving moment-based parameter estimation for the following model:
\begin{model}[Weakly separated heteroscedastic low-rank GMs] \label{model:heteroscedastic}
Let $h \in [K]$ be a discrete random variable such that $0 < \probP(h = j) = \pi_j < 1$ for $j = 1, \dots, K$ and $\sum_{j=1}^K \pi_j = 1$, where $\pi_j$ is the mixing probability of the $j$-th mixture component. 
Suppose that $\bX_j \in \R^d$ is a \emph{\textbf{low-rank Gaussian}} random vector, i.e., $\bX_j \sim \cN({\bmu}_j,\Sigma_j)$ with $\rank \Sigma_j  = R_j \leq R_{\max} \leq d$, where $R_{\max} = \max\{R_1,\dots,R_K\}$. Suppose that the components $\bX_j$ are weakly separated, i.e., $\norm{\Sigma_j}_F \gg \norm{{\bmu}_j}_2$. 
Then the random vector $\bY$ drawn as $\bX_h$ is said to be a \emph{\textbf{weakly separated heteroscedastic low-rank GMs}}\footnote{We use the term ``low-rank GMs'' to indicate that the covariance matrix of each Gaussian component in the mixture is of low rank. We note that \cite{lyu2023optimal} also uses the term ``low-rank GMs'' but refers to a different model where each observation of the GMs are assumed to be matrix-valued and have a planted low-rank structure.} and has the following data generating process:
\begin{equation}\label{eq:data-gen-heteroscedastic}
    {\by_n} = {\bmu}_{h} + \bm{\varepsilon}_{n}, \mkern9mu \bm{\varepsilon}_{n} \iid \cN(\bm{0},\Sigma_{h}), \mkern9mu n = 1, \dots, N, 
\end{equation}
which can be interpreted as a two-step process of picking a component index according to mixing probabilities and sampling a low-rank Gaussian random vector from the picked component. 
\end{model}

There are several motivations for studying \cref{model:heteroscedastic}. 
\begin{enumerate}
    \item By allowing the mixture components to have covariance matrices of different ranks and different spectral distributions, we can model data with heterogeneous noise across samples in different clusters. Such data arise when the samples are obtained under different measurement conditions, e.g., econometric data with unobserved heterogeneity \cite{compiani2016using}, spectrophotometric data \cite{cochran1977statistically}, and astronomy data \cite{tamuz2005correcting}. Related heterogeneous noise models have been studied in \cite{hong2023optimally, hong2025optimal} in the setting of principal component analysis and \cite{cai2011optimal} in the setting of heteroscedasticity detection.
    \item The low-rank covariance structure in \cref{model:heteroscedastic} significantly reduces the dimension of the parameter space and is a common assumption for applications in finance \cite{zhou2022covariance}, genomics \cite{ba2023learning}, and spatial statistics \cite{cao2021exploiting}. Moreover, the low-rank structure makes \cref{model:heteroscedastic} particularly useful for model-based subspace clustering \cite{fraley2002model}, structured graph learning \cite{kumar2020unified}, image processing \cite{gong2019intrinsic, pope2021intrinsic} and diffusion model training \cite{zhang2024emergence, stanczuk2024diffusion}.
    \item Considering the mixture centers as the ``signal'' and the covariances as the ``noise,'' we impose the weak separation condition in order to study the mixture estimation problem in the low signal-to-noise ratio (SNR) regime. Related notions of low-SNR mixtures have been explored in recent works including \cite{abas2022generalized, katsevich2023likelihood, fan2023likelihood, fan2024maximum}.
\end{enumerate}
We defer further technical details on  \cref{model:heteroscedastic} to \Cref{sec:hetero-low-rank-gmm}. For a concrete intuition about the geometry of \cref{model:heteroscedastic}, consider a simple example in \cref{fig:illustration}.  
\begin{figure}[H]
\captionsetup{justification=justified, singlelinecheck=off}
  \centering
\subfloat{\includegraphics[width=0.5\textwidth]{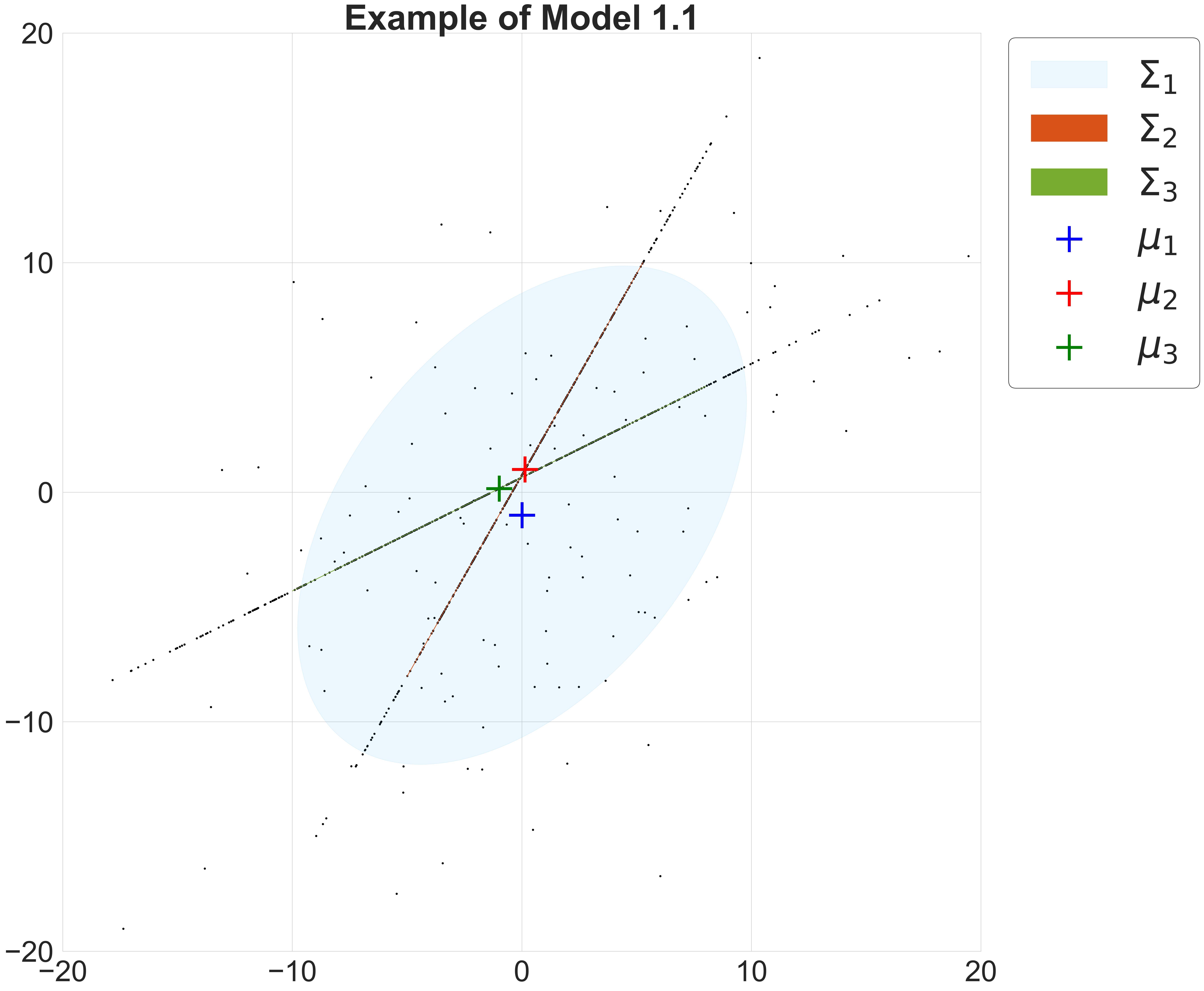}}
\caption{$1000$ i.i.d.~samples from an example of \cref{model:heteroscedastic}: $\bY\in \R^2 \sim 0.5\cN({\bmu}_1, \Sigma_1) + 0.4\cN({\bmu}_2, \Sigma_2) + 0.1\cN({\bmu}_3, \Sigma_3)$, where $\rank \Sigma_1 = \rank \Sigma_2 =1, \rank \Sigma_3 = 2$. The distribution of $\bY$ is a mixture of two one-dimensional Gaussians and a two-dimensional Gaussian.}\label{fig:illustration}
\end{figure}
\subsection{Contributions}
We introduce a variant of GMM --- the diagonally-weighted GMM (DGMM) --- which uses the optimal diagonal approximation of the full GMM weighting matrix to assign order-specific weights to moment conditions, addressing the challenge of balancing statistical efficiency, computational complexity, and numerical stability. Correspondingly, our contributions encompass the statistical, algorithmic, and numerical aspects of DGMM.
\begin{itemize}
    \item \textbf{Statistical aspect.} We derive the statistical properties of the DGMM estimator, showing that it is consistent and asymptotically normal, and that it achieves an intermediate efficiency between MM and GMM estimator (\cref{thm:statistical-properties}). 
    \item \textbf{Algorithmic aspect.} We apply DGMM to study the parameter estimation problem for weakly separated heteroscedastic low-rank GMs (\cref{model:heteroscedastic}) and design a computationally efficient\footnote{An algorithm is computationally efficient if its computational complexity is linear in its sample complexity.} algorithm to obtain the DGMM estimator without explicitly computing or storing moment tensors. This algorithm has significantly lower computational complexity than standard MM and GMM procedures (\cref{thm:computational-complexity-comparison}). 
    \item \textbf{Numerical aspect.} We implement the algorithm and empirically validate the advantages of DGMM. In numerical studies (\Cref{sec:numerical-studies}), DGMM attains smaller estimation errors while requiring substantially shorter runtime than MM and GMM. 
\end{itemize}
We organize our results on DGMM in the context of \cref{model:heteroscedastic} in \cref{tab:comparison-estimators}.
\begin{center}
\begin{table}[H]
\centering
\captionsetup{justification=justified, singlelinecheck=off}
\begin{tabular}{|M{0.175\linewidth}|M{0.23\linewidth}|M{0.245\linewidth}|M{0.23\linewidth}|}
\cline{1-4}
 & \textbf{MM}& \textbf{DGMM} &\textbf{GMM}\\ 
\cline{1-4} 
\textbf{Consistency}  & Consistent  & Consistent (\cref{thm:statistical-properties}) & Consistent \\ 
\cline{1-4}
\textbf{Asymptotic normality}   & Asymptotically normal & Asymptotically normal (\cref{thm:statistical-properties}) & Asymptotically normal\\ 
\cline{1-4}
\textbf{Asymptotic efficiency}   & Suboptimal efficiency & Intermediate efficiency (\cref{thm:statistical-properties}) & Optimal efficiency\\ 
\cline{1-4}
\textbf{Weight computation}   & $\cO(1)$ & $\cO(L^2 K^2 + L K^2 dR_{\max}^2$ $+ NLK + NKdR_{\max})$  (\cref{thm:computational-complexity-W}) & $\cO(Nd^{2L} + d^{3L})$ (\cref{thm:computational-complexity-comparison})\\ 
\cline{1-4}
\textbf{Gradient computation}   & $\cO(Nd^{L+1}KR_{\max})$ (\cref{thm:computational-complexity-comparison}) & $\cO(L^2 K^2 + L K^2 dR_{\max}^2$ $+ NLK + NKdR_{\max})$  (\cref{thm:computational-complexity-comparison}) & $Nd^{L+1}KR_{\max}$ $+ d^{2L+1}KR_{\max})$ (\cref{thm:computational-complexity-comparison}) \\ 
\cline{1-4}
\textbf{Overall computational complexity} & $\cO(Nd^{L+1}KR_{\max})$ (\cref{thm:computational-complexity-comparison})  & $\cO(L^2 K^2 + L K^2 dR_{\max}^2$ $+ NLK + NKdR_{\max}$ $+ NLd m)$  (\cref{thm:computational-complexity-comparison}) & $\cO(Nd^{2L} + d^{3L}$ $+ Nd^{L+1}KR_{\max}$ $+ d^{2L+1}KR_{\max})$ (\cref{thm:computational-complexity-comparison}) \\ 
\cline{1-4}
\textbf{Numerical stability} & Stable  & Stable & Unstable, due to $\left(\widehat{S}^{[t]}\right)^{-1}$ in \cref{eq:o-gmom-multistep}\\
\cline{1-4}
\end{tabular}
\caption{Comparison of moment-based estimators. $N$ is the sample size, $d$ is the number of dimensions, $K$ is the number of mixture components, $R_{\max} = \max(R_1, \dots, R_K)$, $L$ is the highest moment order, and $m$ is the number of landmarks for the Nystr\"om approximation, which we will discuss in \cref{thm:computational-complexity-inner-prod-sum}.}\label{tab:comparison-estimators}
\end{table}
\end{center}
\subsection{Outline}
In \Cref{sec:preliminaries}, we give the preliminaries on tensors, moments, cumulants, model specifications, MM, and GMM. We then present the proposed DGMM estimator in \Cref{sec:proposed-dgmom}. In \Cref{sec:statistical-computational}, we detail our results on the statistical properties of the DGMM estimator and the computational complexity of our algorithm for obtaining the DGMM estimator. In \Cref{sec:numerical-studies}, we report the numerical studies to demonstrate the empirical performance of DGMM in the context of \cref{model:heteroscedastic}. 
\section{Preliminaries}
\label{sec:preliminaries}
In this section, we introduce the definitions and notations.
\subsection{Tensors}
The definitions are based on standard references, e.g., \cite{comon2008symmetric, kolda2009tensor,landsberg2011tensors, hackbusch2014numerical}. 
\begin{definition}[Tensors, {\cite[Definition 2.3.1.3., p.~33]{landsberg2011tensors}}]
\label{def:tensors}
Let $\V_1, \dots, $ $\V_k$ be finite-dimensional vector spaces over $\C$. A function $f: \V_1 \times \cdots\times \V_k \to \C$ is multilinear if it is linear in each factor $\V_l$ for all $l \in [k]$. 
The set of all such multilinear functions forms a vector space and is called the \emph{tensor product} of the dual vector spaces ${\V_1}^{*} , \dots, {\V_k}^{*}$, denoted by ${\V_1}^{*} \otimes \cdots \otimes {\V_k}^{*}$. An element $\cX \in {\V_1}^{*} \otimes \cdots \otimes {\V_k}^{*}$ is called a \emph{tensor} and we say that the tensor is of order $k$ and dimensions $(\dim{\V_1}^{*}, \dots, \dim{\V_k}^{*})$.
\end{definition}
For the purpose of our paper, we consider $\cX \in \sT^k(\R^d)\coloneq \underbrace{\R^d \otimes \cdots \otimes \R^d}_{k \text{ copies }},$ which is the set of all real-valued tensors of order $k$ and dimensions $(d, \dots, d)$. 

\begin{definition}[Inner product and norm on $\sT^k(\R^d)$]
\label{def:tensor-inner-product}
The \emph{inner product} of two tensors $\cX, \cY \in \sT^k(\R^d)$ is 
\begin{align}
    \inner{\cX}{\cY} = \sum_{i_1 = 1}^d \cdots \sum_{i_k = 1}^d \left(\cX_{i_1, \dots, i_k}\right) \left(\cY_{i_1, \dots, i_k}\right).
\end{align}
The \emph{norm} of a tensor $\cX \in \sT^k(\R^d)$ is 
\begin{align}
\label{eq:tensor-norm}
    \norm{\cX}  = \sqrt{\inner{\cX}{\cX}} = \sqrt{\sum_{i_1 = 1}^d \cdots \sum_{i_k = 1}^d \cX^2_{i_1, \dots, i_k}}.
\end{align}
\end{definition}

\begin{definition}[Symmetric tensors]
\label{def:symmetric-tensors}
An order-$k$ tensor $\cX \in \sT^k(\R^d)$ is \emph{symmetric} if, for any permutation of the $k$ indices, its entries remain unchanged, i.e., 
\begin{align}
\cX_{i_{\sigma(1)}, \dots, i_{\sigma(k)}} = \cX_{i_1, \dots, i_k}, \mkern9mu i_1, \dots, i_k \in [d],
\end{align}
for all permutations $\sigma \in \frakG_k$, where $\frakG_k$ is the symmetric group of permutations on the indices $\{1, \dots, k\}$. $\sS^k(\R^d)$ denotes the set of symmetric tensors in $\sT^k(\R^d)$. By \cite[Proposition 3.3]{comon2008symmetric}, we have an equivalent reformulation of symmetric tensors via the symmetrization operator: an order-$k$ tensor $\cX \in \sT^k(\R^d)$ is symmetric if and only if  $\Sym(\cX) = \cX$, where $\Sym:\sT^k(\R^d) \to \sS^k(\R^d) $ is the linear operator that acts on $\cX \in \sT^k(\R^d)$ in the following way 
\begin{align}
\left(\Sym(\cX)\right)_{i_1,\dots,i_k} = \frac{1}{k!} \sum_{\sigma\in\frakG_k} \cX_{i_{\sigma(1)}, \dots, i_{\sigma(k)}}.
\end{align}
\end{definition}

\subsection{Moments, cumulants, Bell polynomials}
We present the definitions of moments, cumulants, and Bell polynomials, as well as how they are related. Throughout the paper, we assume that every random variable is defined on a suitable probability space $\left(\Omega, \scrF, \probP\right)$. 
\begin{definition}[Moments]
\label{def:moments}
Suppose $X$ is a real-valued random variable with the probability density function (p.d.f.) $p_X(x)$. Then the $k$-th \emph{uncentered moment} of $X$ is defined as 
\begin{align}
\cM^{(k)}(X) = \EX\left[X^k\right] = \int_{\R} x^k p_X(x) \mathrm{d} x.
\end{align} 

Consider a vector of real-valued random variables, $\bX = (X_1,  \dots, X_d)^T$, with the joint p.d.f.~$p_{\bX}(x_1, \dots, x_d)$. The \emph{joint uncentered moment} of $\bX$ of order $k$ is an order-$k$ symmetric tensor, $\cM^{(k)}(\bX) \in \sS^k(\R^d) $, whose entries are given by 
\begin{align}\label{def:joint-moments}
    \left(\cM^{(k)}(\bX)\right)_{i_1, \dots, i_k} = \EX\left[X_{i_1}  \cdots X_{i_k} \right], \mkern9mu i_1, \ldots, i_k \in [d].
\end{align}
Let $\bm{t} = (t_1, \dots, t_d)$ denote an element of $\R^d$. If there exists a $\delta >0$ such that 
\begin{align}
\EX\left[e^{{\bm{t}}^{T} \bX}\right] < \infty, \mkern9mu \norm{\bm{t}} < \delta,
\end{align} 
then the \emph{moment generating function} of the joint distribution of $\bX$ exists and is given by 
\begin{align} 
M_{\bX}(\bm{t}) = \EX\left[e^{{\bm{t}}^{T} \bX}\right],  \mkern9mu \norm{\bm{t}} < \delta.
\end{align} 
If $\bX = (X_1, \dots, X_d)^T$ and $M_{\bX}(\bm{t})$ is finite in a neighborhood of $\bm{0}$, then $\EX\left[{X_1}^{k_1}\cdots {X_d}^{k_d}\right]$ is finite for any nonnegative integers $k_1, \dots, k_d$, and 
\begin{align}
    \EX\left[{X_1}^{k_1} \cdots {X_d}^{k_d}\right] = \frac{\partial^{k_1 + \cdots + k_d}}{\partial {t_1}^{k_1} \cdots \partial {t_d}^{k_d}} \left. M_{\bX}(\bm{t}{)} \right|_{\bm{t}=\bm{0}}
\end{align}
The entries in the $k$-th uncentered moment $\cM^{(k)}(\bX)$ are then computed by enumerating over all possible sets of nonnegative integers $(k_1, \dots, k_d)$  such that $k_1 + \cdots + k_d = k$. 
\end{definition}

\begin{definition}[Cumulants] 
\label{def:cumulants}
\label{def:cumulants-multivariate}
The $k$-th \emph{cumulant} $\kappa_k(X)$ of a random variable $X$ is defined recursively by the following relation between moments and cumulants:
\begin{align}
    \EX[X^k] = \sum_{\pi \in \Upsilon_k} \prod_{S \in \pi} \kappa_{|S|} (X),
\end{align}
where $\Upsilon_k$ is the set of all partitions of the set $[k]$. A partition $\pi \in \Upsilon_k$ is a set of disjoint non-empty subsets $S$ such that $\bigsqcup_{S\in \pi} S = [k]$. 

Suppose $\bX$ is a vector of real-valued random variables, $\bX = (X_1,  \dots, X_d)^T$, with the joint p.d.f.~$p_{\bX}(x_1, \dots, x_d)$ and the moment-generating function $M_{\bX}$ with radius of convergence $\delta > 0$. Then the \emph{cumulant-generating function} of the joint distribution of $\bX$ is given by 
\begin{align}
K_{\bX}(\bm{t}) = \log M_{\bX}(\bm{t}), \mkern9mu \norm{\bm{t}} < \delta.
\end{align}

Similarly, the $k$-th \emph{joint cumulant}, $\kappa^{(k)}(\bX) $, can be obtained by computing 
\begin{align}
\kappa^{(k_1, \dots, k_d)}(\bX) = \frac{\partial^{k_1 + \cdots + k_d}}{\partial {t_1}^{k_1} \cdots  \partial {t_d}^{k_d}} \left.K_{\bX}(\bm{t}) \right|_{\bm{t}=\bm{0}},
\end{align}
for all possible sets of nonnegative integers $(k_1, \dots, k_d)$  such that $k_1 + \cdots + k_d = k$.
\end{definition}

\begin{definition}[Complete Bell polynomials, {\cite{bell1934exponential}} (cf.~{\cite[Def.~11.1]{charalambides2018enumerative}})]
\label{def:complete-bell}
The \emph{complete Bell polynomials} in the variables $x_1, \dots, x_k$ are defined by the sum \begin{equation}
    B_k\left(x_1, \dots , x_k\right) = \sum_{(j_1, \dots, j_k)\in \cP_k} \frac{k!}{j_1! j_2 ! \cdots j_k!}\left(\frac{x_1}{1!}\right)^{j_1} \left(\frac{x_2}{2!}\right)^{j_2} \cdots \left(\frac{x_k}{k!}\right)^{j_k}, 
\end{equation}
where $\cP_k = \left\{(j_1, \dots, j_k) : j_1 + 2j_2 + \cdots + kj_k = k, j_1, \dots, j_k \in \mathbb{Z}_{\geq 0} \right\}.$
\end{definition}

\begin{definition}[Partial Bell polynomials, {\cite{bell1934exponential}} (cf.~{\cite[Def.~11.2]{charalambides2018enumerative}})]
\label{def:partial-bell}
The \emph{partial Bell polynomials} in the variables $x_1, \dots, x_k$ of degree $l$ are defined by the sum
\begin{equation}
    B_{k,l}\left(x_1, \dots , x_k\right) = \sum_{(j_1, \dots, j_k)\in \cP_{k,l}} \frac{k!}{j_1! j_2 ! \cdots j_k!}\left(\frac{x_1}{1!}\right)^{j_1} \left(\frac{x_2}{2!}\right)^{j_2} \cdots \left(\frac{x_k}{k!}\right)^{j_k}, 
\end{equation}
where $\cP_{k,l} = \left\{ (j_1, \dots, j_k): j_1 + 2j_2 + \cdots + kj_k = k, j_1 + j_2 + \cdots + j_k = l, j_1, \dots, j_k \in \mathbb{Z}_{\geq 0}  \right\},$ i.e., the summation is taken over the partitions of $k$ \textit{into $l$ parts}. 
\end{definition}

The following proposition relates moments, cumulants, and Bell polynomials.
\begin{proposition}[{\cite[p.~160]{comtet1974advanced}}, {\cite[Proposition 3.3.1]{peccati2011wiener}}]
\label{prop:mom-cum-bell}
    Let $X$ be a random variable such that $\EX\left[ |X|^k \right] \leq \infty$. For $ l = 1, \dots, k$, let $\cM^{(l)} = \EX\left[X^l\right](X)$ be the $l$-th moment and $\kappa^{(l)}(X)$ be the $l$-th cumulant. Then 
    \begin{align}
       \cM^{(k)}(X) &= B_k(\kappa^{(1)}(X), \dots, \kappa^{(k)}(X))\\
        \kappa^{(k)}(X) &= \sum_{l = 1}^k (-1)^{l-1} (l-1)! B_{k, l}\left(\cM^{(1)}(X), \dots, \cM^{(k-l+1)}(X)\right).
    \end{align}
\end{proposition}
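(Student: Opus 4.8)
The plan is to derive both identities by comparing Taylor coefficients in the relation $K_X(t) = \log M_X(t)$ between the moment- and cumulant-generating functions of \cref{def:cumulants}, specialized to the scalar case $d=1$. The one caveat is that the hypothesis only yields $\EX[|X|^k] < \infty$, so $M_X$ need not converge on any neighborhood of $0$; the clean fix is to carry out every manipulation in the ring of formal power series $\R[[t]]$ (it suffices to work modulo $t^{k+1}$, where only the well-defined quantities $\cM^{(0)}(X),\dots,\cM^{(k)}(X)$ and $\kappa^{(1)}(X),\dots,\kappa^{(k)}(X)$ appear). In this formal setting the recursion in \cref{def:cumulants}, together with the set-partition expansion $B_k(x_1,\dots,x_k) = \sum_{\pi\in\Upsilon_k}\prod_{S\in\pi}x_{|S|}$ of the complete Bell polynomial, already gives the first identity $\cM^{(k)}(X) = B_k(\kappa^{(1)}(X),\dots,\kappa^{(k)}(X))$ essentially by inspection, and moreover shows that $1 + \sum_{l\geq 1}\cM^{(l)}(X)\,t^l/l!$ is the formal exponential of $\sum_{l\geq 1}\kappa^{(l)}(X)\,t^l/l!$, i.e., $K_X(t) = \log M_X(t)$ as formal series (consistent with the multivariate definition).

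For the second identity I would then invoke the companion logarithmic identity for Bell polynomials: if $u = \sum_{l\geq 1} m_l\, t^l/l!$, then $\log(1+u) = \sum_{k\geq 1}\big(\sum_{l=1}^{k} (-1)^{l-1}(l-1)!\, B_{k,l}(m_1,\dots,m_{k-l+1})\big)\, t^k/k!$, where the coefficient $(-1)^{l-1}(l-1)!$ is simply the $l$-th derivative of $\log(1+u)$ at $u=0$. Substituting $m_l = \cM^{(l)}(X)$ and reading off the coefficient of $t^k/k!$ in $K_X(t) = \log M_X(t)$ gives $\kappa^{(k)}(X) = \sum_{l=1}^{k}(-1)^{l-1}(l-1)!\, B_{k,l}(\cM^{(1)}(X),\dots,\cM^{(k-l+1)}(X))$, as claimed. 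Both the exponential formula used above and this logarithmic formula are classical --- they are the two specializations of Fa\`a di Bruno's formula to $\exp$ and to $u\mapsto\log(1+u)$ --- and may be cited from \cite{comtet1974advanced, charalambides2018enumerative}; alternatively, the second can be obtained combinatorially via M\"obius inversion over the partition lattice $\Upsilon_k$ ordered by refinement, whose M\"obius function between the finest and coarsest partitions equals $(-1)^{k-1}(k-1)!$, while grouping the terms by the number $l$ of blocks recovers $B_{k,l}$.

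The only genuine obstacle is the one flagged above: making the coefficient-matching legitimate when $M_X$ is merely a truncated formal series rather than an analytic function, so one should either phrase the whole argument in $\R[[t]]/(t^{k+1})$ or add (and then localize near $0$ with) an auxiliary assumption that the moment-generating function converges in a neighborhood of the origin. Once that is handled, the remainder is routine bookkeeping with the two Bell-polynomial generating-function identities.
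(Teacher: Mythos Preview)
Your proposal is correct and is the standard derivation of these identities via Fa\`a di Bruno applied to $\exp$ and $\log(1+\cdot)$, with the appropriate care about working formally in $\R[[t]]/(t^{k+1})$ when only finitely many moments exist. The paper, however, does not supply its own proof of this proposition at all: it is stated with attribution to \cite[p.~160]{comtet1974advanced} and \cite[Proposition 3.3.1]{peccati2011wiener} and then used as a black box in the complexity proofs. So there is nothing to compare against beyond noting that your argument is essentially the one found in those references.
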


\subsection{Heteroscedastic low-rank GM}\label{sec:hetero-low-rank-gmm}
Suppose $\bY \in \R^d \sim$ \cref{model:heteroscedastic}. The p.d.f.~of $\bY$ is 
\begin{equation} 
p_{\bY}(\bx) = \sum_{j=1}^K \pi_j p_{{\bX}_j}(\bx; {\bmu}_j, \Sigma_j),
\end{equation}
where $p_{{\bX}_j}(\bx; {\bmu}_j, \Sigma_j)$ is the p.d.f.~of ${\bX}_j$, the $j$-th low-rank Gaussian component. For $\bx \in {\bmu}_j + \range(\Sigma_j)$, the p.d.f.~of ${\bX}_j$\footnote{For a more detailed characterization of the p.d.f.~of a low-rank Gaussian, see \Cref{sec:supporting-proofs-model}.} is given by
\begin{equation}\label{eq:pdf-single-gaussian-lowrank}
p_{{\bX}_j}(\bx) = \frac{\exp\left\{-\frac{1}{2} (\bx- {\bmu}_j)^T \Sigma_j^{\dagger} (\bx-{\bmu}_j)\right\}}{(2\pi)^{\frac{R_j}{2}} \left(\Det(\Sigma_j)\right)^{\frac{1}{2}}},
\end{equation}
where $\Sigma_j^{\dagger}= U_j \Lambda_j^{-1} U_j^T$ is the Moore-Penrose generalized inverse and $\Det(\Sigma_j) =  \prod_{r=1}^{R_j} \lambda_j^{(r)}$ is the pseudo-determinant\footnote{We use the standard definition (see, e.g., \cite{knill2014cauchy}).}. The model parameters are collected in a vector concatenating the mixing probabilities, the centers, and the vectorized Cholesky factors of the covariances, i.e.,
\begin{equation}
\label{eq:theta}
   \btheta \coloneq\left[\pi_1; \dots; \pi_K; {\bmu}_1; \dots; {\bmu}_K; \vectorize(V_1); \dots; \vectorize(V_K)\right]^T \in \Theta \subset \R^{p},
\end{equation}
where $p = K + dK + d R_{\max}, R_{\max} = \max\{R_1,\dots,R_K\}$, and 
\begin{align}
    \Sigma_{j} = V_{j} {V_{j}}^T, \quad 
    V_{j} =  \left(\begin{matrix}
    \vline & \vline & \vline \\
    {{\bm{v}_{j}}^{(1)}} & \cdots & {{\bm{v}_{j}}^{(R_{j})}} \\
    \vline & \vline & \vline
    \end{matrix}\right) \in \mathbb{R}^{d\times R_{j}}.
\end{align} 

Following the standard practice (see \cite{hansen1982large, newey1994large}), we restrict the parameter space $\Theta$ to be a compact set containing the true parameters. This choice of parameterization implicitly imposes the rank constraints on the Gaussian components. We study the parameter estimation problem when 
the number of mixture components $K$ and the maximum rank $R_{\max}$ are given.

\subsection{Method of moments (MM) and generalized method of moments (GMM)}
\label{sec:mom-gmom}
To better motivate our proposed estimator, we provide some preliminaries on MM and GMM in the context of \cref{model:heteroscedastic}.  Define the \emph{moment function} $g: \Theta \times \R^d \to \R^q$ by
\begin{align}\label{eq:moment-function}
g({\btheta}, {\by_n}) \coloneq &\left(\vectorize(\cM^{(1)}\left({\btheta}\right) - {\by_n}); \cdots; \vectorize(\cM^{(L)}\left({\btheta}\right) - {\by_n}^{\otimes L})\right)^T \in \R^{q},
\end{align}
where $q = d + d^2 + \cdots + d^L$, $L$ is the highest moment order, and $\cM^{(k)}({\btheta})$ denotes the $k$-th population moment that can be computed by \cite[Theorem 4.1]{pereira2022tensor}:
\begin{align*}\label{eq:k-th-pop-moment}
\cM^{(k)}(\btheta) = \sum_{j=1}^K \sum_{l=0}^{\lfloor{k/2}\rfloor} \pi_j C_{k, l} \text{Sym}\left({{\bmu}_j}^{\otimes(k-2l)} \otimes (V_j {V_j}^T)^{\otimes l}\right) \in \sS^k (\R^d), \mkern9mu C_{k,l} = \frac{k!}{(k-2l)!l!2^l}.\numberthis
\end{align*}

For the purpose of this paper, we assume that there exists a suitable choice of $L$ such that the model is (1) \emph{globally identifiable}, i.e., $\EX[g({\btheta}, {\by_n})] = \bm{0} \text{ if and only if } {\btheta} = {\btheta}^{*}$, and (2) \emph{locally identifiable}, i.e., $\rank(G(\btheta^{*})) = p$, where $G(\btheta) = \EX\left[\nabla_{{\btheta}} g({\btheta}, {\by})\right]$ (as defined in, e.g., \cite{hansen1982large, newey1994large}).

Define the vector of \emph{sample moment conditions}
\begin{align}\label{eq:mom-conditions}
    \bar{g}_N\left({\btheta}\right) \coloneq \frac{1}{N} \sum_{n=1}^N g({\btheta}, {\by_n})  \in \R^{q}.
\end{align}

The GMM estimator, $\widehat{{\btheta}}^{(\gmm)}$, of the model parameter $\btheta$ in \cref{eq:theta} is obtained by solving a weighted moment-matching optimization problem in $\btheta$:
\begin{equation}
\begin{aligned}\label{eq:gmm-optimization}
\minimize_{{\btheta} \in \Theta} & \mkern9mu \bar{g}_N\left({\btheta}\right)^T W \bar{g}_N\left({\btheta}\right),\\
\textbf{subject to} & \mkern9mu 0<\pi_1, \dots, \pi_K<1, \mkern9mu\sum_{j=1}^K \pi_j=1,
\end{aligned} 
\end{equation}
for some symmetric positive semi-definite weighting matrix $W\in \R^{q\times q}$.  It is shown in \cite{hansen1982large} that the GMM estimator achieves the optimal asymptotic efficiency when $W = c S^{-1}$, for any $c>0$ and $S$ equals the asymptotic variance of $\sqrt{N}\bar{g}_N\left({\btheta}^{*}\right)$ given by
\begin{equation}
\label{eq:S-matrix}
    S = \lim_{N\to\infty} \sum_{n=1}^N \EX \left[g({\btheta}^{*}, {\by_n}) g({\btheta}^{*}, {\by_n})^T\right],
\end{equation}
where ${\btheta}^{*}$ is the ground-truth parameter. 
When $W= I$, $\widehat{{\btheta}}^{(\gmm)}$ is equivalent to the MM estimator, $\widehat{{\btheta}}^{(\mm)}$. In practice, since ${\btheta}^{*}$ is unknown, $\widehat{{\btheta}}^{(\gmm)}$ is often obtained via a multi-step estimation procedure, where at each estimation step, a consistent estimator of $S$, $\widehat{S}^{[t]}$, and a consistent estimator of $\btheta$, $\widehat{{\btheta}}^{[t]}$, are obtained alternately:
\begin{equation}
       \begin{aligned}\label{eq:o-gmom-multistep}
     \widehat{S}^{[t]} =& \frac{1}{N}\sum_{n=1}^N g(\widehat{{\btheta}}^{[t-1]}, {\by_n}) g(\widehat{{\btheta}}^{[t-1]}, {\by_n})^T,\\
        \widehat{{\btheta}}^{[t]} = & \quad\argmin_{{\btheta} \in \Theta} \mkern9mu \bar{g}_N\left({\btheta}\right)^T \widehat{W}^{[t]}\bar{g}_N\left({\btheta}\right), \mkern9mu \widehat{W}^{[t]} = \left(\widehat{S}^{[t]}\right)^{-1},
    \end{aligned} 
\end{equation}
which terminates when $t = T$ or $\norm{\widehat{{\btheta}}^{[t]} -  \widehat{{\btheta}}^{[t-1]}} < \varepsilon_{{\btheta}}$ with stopping criteria constants $T, \varepsilon_{{\btheta}}$.

\section{Proposed DGMM estimator}
\label{sec:proposed-dgmom}
The main challenge in deploying MM and GMM in practice is that directly computing the sample moment conditions $\bar{g}_N({\btheta})$ in \cref{eq:mom-conditions} and the optimization objective $\bar{g}_N\left({\btheta}\right)^T W \bar{g}_N\left({\btheta}\right)$ in \cref{eq:gmm-optimization} (and its gradients if a gradient-based solver is used) incurs prohibitively high computation and storage cost in high dimensions or when higher-order moments are required. Another prominent computational challenge is the difficulty in estimating the weighting matrix $\widehat{W}^{[t]}$ in \cref{eq:o-gmom-multistep} when the number of moment conditions $q$ is large. When $q > N$, $\widehat{S}^{[t]}$ is not even invertible. When $q < N$ but $\frac{q}{N}$ is not negligible, $\widehat{S}^{[t]}$ is invertible but has a large condition number, as has been observed and discussed in prior works including \cite{ledoit2004well, roodman2009note}. This implies that inverting $\widehat{S}^{[t]}$ can lead to numerical instabilities and substantially degrade the accuracy of the resulting parameter estimates. In practice, it is usually unrealistic to find large enough sample size $N$ to make the ratio $\frac{q}{N}$ negligible.
To address these computational challenges, we propose the diagonally-weighted GMM (DGMM), where the weighting matrix is given by the optimal diagonal approximation of the theoretically optimal weighting matrix:
\begin{equation}\label{eq:diagonal-approximate-inverse-opt}
W = \argmin_{W \in \mathcal{W}} \norm{ W S - I}^2_F,
\end{equation}
where 
$\mathcal{W}$ is the set of $q \times q$ diagonal matrices of the form $\diag(\underbrace{a_1, \dots, a_1}_{d \text{ copies}}, \dots, \underbrace{a_L, \dots, a_L}_{d^L \text{ copies}})$ and $\norm{\cdot}_F$ denotes the Frobenius norm. To solve for ${w_k}$, we use the first-order optimality condition: 
\begin{equation}
\begin{split}
  0 &=  \nabla_{w_k} \sum_{i \in \cI_k} \sum_{j=1}^{q} \left( w_k S_{ij} - I_{ij}\right)^2  =  2\sum_{i \in \cI_k }\sum_{j=1}^{q} \left( w_k S_{ij} - I_{ij}\right) \nabla_{w_k} (w_k S_{ij})\\
    &=  2\sum_{i \in \cI_k } \sum_{j=1}^{q} \left( w_k S_{ij} - I_{ij}\right)  S_{ij} = 2\sum_{i \in \cI_k } \left(\sum_{j=1}^{q}  w_k S_{ij}^2 -  S_{ii}\right),
\end{split}
\end{equation}
where $\cI_k$ denotes the set of indices in $S$ for the $k$-th moment, which gives the desired weights
\begin{align}
\label{eq:w_k-direct}
w_k = \frac{\sum_{i \in \cI_k} S_{ii}}{\sum_{i \in \cI_k }\sum_{j=1}^{q}S_{ij}^2}, \quad k = 1, \dots, L. 
\end{align}

We obtain the DGMM estimator $\widehat{{\btheta}}^{(\dgmm)}$ via a multi-step procedure similar to \cref{eq:o-gmom-multistep}:
\begin{equation}
    \begin{aligned}\label{eq:d-gmom-multistep}
     \widehat{S}^{[t]} =& \frac{1}{N}\sum_{n=1}^N g(\widehat{{\btheta}}^{[t-1]}, {\by_n}) g(\widehat{{\btheta}}^{[t-1]}, {\by_n})^T,\\
        \widehat{{\btheta}}^{[t]} = & \quad \argmin_{{\btheta} \in \Theta} \mkern9mu Q^{[t]}_N(\btheta) \coloneq \bar{g}_N\left({\btheta}\right)^T \widehat{W}^{[t]}\bar{g}_N\left({\btheta}\right),\\
        & \quad \widehat{W}^{[t]} = \diag(\underbrace{\widehat{w}^{[t]}_1, \dots, \widehat{w}^{[t]}_1}_{d \text{ copies}}, \dots, \underbrace{\widehat{w}^{[t]}_L, \dots, \widehat{w}^{[t]} _L}_{d^L \text{ copies}}), \mkern9mu  \widehat{w}_k^{[t]} = \frac{\sum_{i \in \cI_k} \widehat{S}_{ii}^{[t]}}{\sum_{i \in \cI_k }\sum_{j=1}^{q}\left(\widehat{S}_{ij}^{[t]}\right)^2}, \\
        & \textbf{subject to} \mkern9mu 0<\pi_1, \dots, \pi_K<1, \quad \sum_{j=1}^K \pi_j=1.
    \end{aligned}
\end{equation}

The DGMM estimator has several advantages.
\begin{enumerate}
\item For any fixed sample size, the sampling errors in moments of different orders can differ significantly, especially in the low SNR regime. By assigning order-specific weights, the DGMM approach effectively down-weights the noisier moment orders, which makes the moment-matching optimization better-conditioned than the unweighted MM.

\item DGMM avoids the inversion step in GMM \cref{eq:o-gmom-multistep} but still inherits key statistical properties of the GMM estimator by exploiting the structure of the theoretically optimal weighting matrix in GMM through the optimal diagonal approximation.

\item DGMM imposes an order-specific ``block-pooling'' structure on the diagonal weighting matrix to reduce the number of weights from $\cO(d^L)$ to $L$, lowering the computational and storage complexities. Moreover, with a slight statistical efficiency loss as the trade-off, estimating fewer weights empirically reduces finite-sample bias by limiting the errors introduced by weight estimation, an effect that has been documented in previous studies, e.g., \cite{hansen1996finite, roodman2009note}.
\end{enumerate}

These statements will be made precise in the respective results in the next section.

\section{Statistical properties and computational complexity}
\label{sec:statistical-computational}
We now present our results on the statistical properties of the DGMM estimator and the computational complexity of our algorithm to obtain the DGMM estimator for \cref{model:heteroscedastic}. We defer all proofs and detailed computations to \Cref{sec:supporting-proofs-statistical-computational}. 
\begin{theoremEnd}[end]{theorem}[Statistical properties of ${\widehat{\btheta}}^{(\dgmm)}$] 
\label{thm:statistical-properties}
Under the global and local identification assumptions, the DGMM estimator ${\widehat{\btheta}}^{(\dgmm)}$ has the following statistical properties:
\begin{enumerate}
    \item (Consistency) ${\widehat{\btheta}}^{(\dgmm)}$ is a consistent estimator of $\btheta^{*}$, that is, $\forall \btheta^{*}, {\widehat{\btheta}}^{(\dgmm)} \xrightarrow{p} \btheta^{*}$ as the sample size $N\to \infty$, where $\xrightarrow{p}$ denotes convergence in probability. 
    \item (Asymptotic normality) ${\widehat{\btheta}}^{(\dgmm)}$ is an asymptotically normal estimator of $\btheta^{*}$ and $\sqrt{N}({\widehat{\btheta}}^{(\dgmm)} - \btheta^{*})\xrightarrow{d}\cN(0, V^{(\dgmm)})$.
    \item (Asymptotic efficiency) 
The asymptotic variance of ${\widehat{\btheta}}^{(\dgmm)}$ has the form:
\begin{align}
V^{(\dgmm)} &= \left[ \sum_{k=1} w_k G_k^T G_k\right]^{-1} \left[\sum_{k=1}^L  \sum_{k^\prime=1}^L w_k w_{k^\prime}G_k^T S_{kk^\prime} G_{k^\prime}\right] \left[ \sum_{k=1} w_k G_k^T G_k\right]^{-1},
\end{align}
where $G_k$ is the $d^k$-by-$p$ submatrix of $G$ corresponding to the $k$-th moment and $S_{kk^\prime}$ is the $d^k$-by-$d^{k^\prime}$ submatrix of $S$, consisting of the correlations between the $k$-th and $k^\prime$-th sample moment conditions, for $k, k^\prime = 1, \dots, L$, i.e.,
\begin{align}
    S_{kk^\prime} = \lim_{N\to\infty} \sum_{n=1}^N \EX \left[\sqrt{N} g^{(k)}(\btheta^{*}, \by_n) g^{(k^\prime)}(\btheta^{*}, \by_n) \right].
\end{align}
\end{enumerate}
\end{theoremEnd}

\begin{proofEnd}
Define the \emph{population objective function}:
\begin{align}
Q(\btheta) \coloneq \EX\left[\bar{g}_N\left({\btheta}\right)\right]^T W \EX\left[\bar{g}_N \left({\btheta}\right)\right], \mkern9mu W = \diag(\underbrace{w_1, \dots, w_1}_{d \text{ copies}}, \dots, \underbrace{w_L, \dots, w_L}_{d^L \text{ copies}}).
\end{align}

We check the standard regularity conditions for consistency, in e.g.~\cite[Theorem 2.6]{newey1994large}:
\begin{enumerate}[label=(\alph*)]
\item The sample data is \text{i.i.d.}, assumed in the specifications of \cref{model:heteroscedastic}.
\item The parameter space $\Theta$ is compact and contains $\btheta^{*}$ --- a standard assumption (see \Cref{sec:hetero-low-rank-gmm}).
\item The moment function $g({\btheta}, {\by_n})$ in \cref{eq:moment-function} depends polynomially on the parameter $\btheta$ and is therefore continuous at each $\btheta \in \Theta$.
\item Since $\Theta$ is compact and all moments of any finite Gaussian mixture is finite, we have $\EX[\sup_{\btheta\in \Theta} \norm{g({\btheta}, {\by_n})}]< \infty$.
\item The model is globally identifiable --- a standard assumption (see \Cref{sec:mom-gmom}).
\item The DGMM weighting matrix at the $t$-th step, $\widehat{W}^{[t]}$ in \cref{eq:d-gmom-multistep}, is positive-definite (thus positive semi-definite): The numerator in \cref{eq:d-gmom-multistep} is positive since the diagonal entries of any covariance estimate satisfy $\widehat{S}_{ii}^{[t]} \geq 0$ and at least one is strictly positive (unless all moments are exactly zero, which is not possible). The denominator in \cref{eq:d-gmom-multistep} is positive: every term under the double sum is a square, with at least one strictly positive entry.
\end{enumerate}
Therefore, 
we get that  ${\widehat{\btheta}}^{[t]}\xrightarrow{p} \btheta^{*} $, which proves the consistency of ${\widehat{\btheta}}^{(\dgmm)}$.

To show that ${\widehat{\btheta}}^{(\dgmm)}$ is asymptotically normal, in addition to the above regularity conditions for consistency, we need to check the following conditions:
\begin{enumerate}[label=(\alph*), resume]
\item The moment function $g({\btheta}, {\by_n})$ in \cref{eq:moment-function} is continuously differentiable in a neighborhood of $\btheta^{*}$ with probability approaching one as $N\to\infty$. 
\item $\EX[\norm{g(\btheta^{*}, \by_n)}^2] < \infty$ and $\EX[\sup_{\btheta\in \cN(\btheta^{*})} \norm{\nabla_{\btheta} g({\btheta}, {\by_n})}]< \infty$, which follows from the definitions of \cref{model:heteroscedastic} and \cref{eq:moment-function}. 
\item $G(\btheta^{*})^T W G(\btheta^{*})$ is non-singular, which follows from the fact that $\widehat{W}^{[t]}$ in \cref{eq:d-gmom-multistep} is positive-definite, $\widehat{W}^{[t]} \xrightarrow{p} W$, and the standard assumption discussed in \Cref{sec:mom-gmom} that the model is locally identifiable, that is, $\rank(G(\btheta^{*})) = p$.
\end{enumerate}

By the standard asymptotic normality result, in e.g.~\cite[Theorem 2.6]{newey1994large}, if conditions (a)-(i) are satisfied, then for any GMM estimator $\widehat{\btheta}$,
\begin{align}
    \sqrt{N} \left(\widehat{\btheta} - \btheta^{*}\right) \xrightarrow{d} \cN\left(\bm{0}, \left(G^T W G\right)^{-1} G^T W S W G  \left(G^T W G\right)^{-1} \right),
\end{align}
where we write $G = G(\btheta^{*})$ for notational convenience.  For the DGMM estimator, since the weighting matrix $W$ is block-diagonal, we can express the asymptotic variance in block form to highlight how each moment group contributes to the overall asymptotic variance. To do this, we partition the Jacobian matrix $G$ into $L$ blocks, $G = [G_1^T,\dots,G_L^T]^T$, so that $G_k$ is the $d^k$-by-$p$ submatrix of $G$ corresponding to the $k$-th moment. We also partition $S$ into blocks such that $S_{kk^\prime}$ is the $d^k$-by-$d^{k^\prime}$ submatrix of $S$, consisting of the correlations between the $k$-th and $k^\prime$-th sample moment conditions, for $k, k^\prime = 1, \dots, L$, i.e.,
\begin{align}
    S_{kk^\prime} = \lim_{N\to\infty} \sum_{n=1}^N \EX \left[\sqrt{N} g^{(k)}(\btheta^{*}, \by_n) g^{(k^\prime)}(\btheta^{*}, \by_n) \right].
\end{align}
Then the asymptotic variance of $\widehat{\btheta}^{(\dgmm)}$ has the form
\begin{align}
V^{(\dgmm)} = \left[ \sum_{k=1} w_k G_k^T G_k\right]^{-1} \left[\sum_{k=1}^L  \sum_{k^\prime=1}^L w_k w_{k^\prime}G_k^T S_{kk^\prime} G_{k^\prime}\right] \left[ \sum_{k=1} w_k G_k^T G_k\right]^{-1}.
\end{align}
\end{proofEnd}
\begin{remark}


As shown in \cite{hansen1982large}, in the class of GMM estimators, ${\widehat{\btheta}}^{(\gmm)}$ achieves the Cramér-Rao lower bound for the variance, which is $V^{(\gmm)} = \left(G^T S^{-1} G\right)^{-1}$, implying that $V^{(\dgmm)} - V^{(\gmm)} \succeq 0$. This efficiency gap between $\btheta^{(\dgmm)}$ and $\btheta^{(\gmm)}$ is reduced when the correlations across sample moment conditions of different orders are weak and when sample moment conditions of the same order share similar level of noise.    
\end{remark}

Having established the statistical properties, we provide a computationally efficient algorithm to obtain ${\widehat{\btheta}}^{(\dgmm)}$ for \cref{model:heteroscedastic}. The main idea is to compute the weights $\widehat{w}_k^{[t]}$ at each DGMM step and the gradients $\nabla_{\btheta} Q^{[t]}_N(\btheta)$ at each optimization iteration via the following quantities that can be obtained without computing or storing the moment tensors:
\begin{equation}
\label{eq:alpha-beta-gamma}
\begin{aligned}
\alpha_k \coloneq \norm{\cM^{(k)}\left({\btheta}\right)}^2, \quad
\beta_{k,n} \coloneq \inner{\cM^{(k)}({\btheta})}{\by_{n}^{\otimes k}}, \quad \gamma_{k,n,n^\prime} \coloneq \inner{{\by_n}^{\otimes k}}{\by_{n^\prime}^{\otimes k}} = \inner{{\by_n}}{\by_{n^\prime}}^k,
\end{aligned}
\end{equation} 
whose computational complexities are shown in \cref{thm:computational-complexity-norm}, \cref{thm:computational-complexity-inner}, and \cref{thm:computational-complexity-inner-prod-sum}.
\begin{theoremEnd}[end]{theorem}[Computational complexity of $\alpha_k$ and its gradients]
\label{thm:computational-complexity-norm}
\begin{align}
\alpha_k \coloneq \norm{\cM^{(k)}\left({\btheta}\right)}^2 = \sum_{i=1}^K \sum_{j=1}^K \pi_i \pi_j \left\langle \cM^{(k)}({\btheta}_i), \cM^{(k)}({\btheta}_j)\right\rangle = \sum_{i=1}^K \sum_{j=1}^K \pi_i \pi_j B_k\left(\kappa^{(1)}_{ij}  , \dots, \kappa^{(k)}_{ij}  \right)
\end{align}
and can be computed in $\cO(k^2 K^2 + k K^2 d R_{\max}^2)$ operations. Its gradients  $\nabla_{\pi_j} \alpha_k$ can be computed in $\cO(k^2 K + k K d R_{\max}^2)$. $\nabla_{{\bmu}_j} \alpha_k$ and $\nabla_{V_j} \alpha_k$ can each be computed in $\cO(k^2 K^2 + k K^2 d R_{\max}^2)$.
\end{theoremEnd}

\begin{proofEnd}
\label{pf:computational-complexity-norm}
We compute $\alpha_k$ by applying \cref{prop:mom-cum-bell}:
\begin{align*}\label{eq:F_1}
\alpha_k &= \sum_{i=1}^K \sum_{j=1}^K \pi_i \pi_j \left\langle \cM^{(k)}({\btheta}_i), \cM^{(k)}({\btheta}_j)\right\rangle = \sum_{i=1}^K \sum_{j=1}^K \pi_i \pi_j B_k\left(\kappa^{(1)}_{ij}  , \dots, \kappa^{(k)}_{ij}  \right) \numberthis,
\end{align*}
where $\cM^{(k)}({\btheta}_j)$ denotes the $k$-th moment of the $j$-th Gaussian component, ${\btheta}_j = \left[\pi_j; {\bmu}_j ; V_j \right]^T$ are the parameters of the $j$-th Gaussian component, and $\kappa^{(l)}_{ij}$ denotes the $l$-th cumulant of the inner product between the $i$-th Gaussian component ${\bX}_i$ and the $j$-th Gaussian component ${\bX}_j$, i.e., $\kappa^{(l)}_{ij} \coloneq \kappa^{(l)}\left(\inner{{\bX}_i}{{\bX}_j}\right), l = 1, \dots, k$. The Bell polynomials are evaluated by applying the recurrence relation (see, e.g., \cite[Theorem 11.2]{charalambides2018enumerative} or \cite[Eq.~7.12]{bell1934exponential}):
\begin{align}
\label{eq:alpha-recurrence}
    \begin{cases*}
    B_{0}\left( \kappa^{(1)}_{ij}, \dots, \kappa^{(k)}_{ij} \right) = 1, & (base case)\\
    B_{k}\left( \kappa^{(1)}_{ij}, \dots, \kappa^{(k)}_{ij} \right) = \sum_{l=0}^{k-1} {k-1 \choose l}B_{k-l-1}\left(\kappa^{(1)}_{ij}, \dots, \kappa^{(k-l-1)}_{ij}\right) \kappa^{(l+1)}_{ij}. & (induction step)
     \end{cases*}
\end{align}
By applying \cite[Proposition 3.3]{pereira2022tensor}, 
\begin{align} 
\label{eq:cumulants}
\kappa^{(l)}_{ij} =
\begin{cases*}
    \left\langle{\bmu}_j, {\bmu}_i \right\rangle, & ($l = 1$) \\
    (l-1)! \Tr((V_i {V_i}^T V_j {V_j}^T)^{\frac{l}{2}}) + \frac{l!}{2}  {{\bmu}_i}^T V_j {V_j}^T  (V_i {V_i}^T V_j {V_j}^T)^{\frac{l-2}{2}} {\bmu}_i \\
    \quad + \frac{l!}{2} {{\bmu}_j}^T (V_i {V_i}^T V_j {V_j}^T)^{\frac{l-2}{2}}V_i {V_i}^T {\bmu}_j, & ($l$ is even) \\
 l! {{\bmu}_j}^T (V_i {V_i}^T V_j {V_j}^T)^{\frac{l-1}{2}} {\bmu}_i. & ($l$ is odd) \\
     \end{cases*}
     \end{align}
Note that the computational complexity of the matrix multiplication $V_j^TV_j$ is $\cO(d R_j^2)$, better than that of $V_j V_j^T$, which is $\cO(d^2R_j)$. By carefully choosing the order of the consecutive matrix multiplication, we compute $\kappa^{(l)}_{ij}$ in \cref{eq:cumulants} using $\cO(ldR_{\max}^2)$ operations\footnote{Our implementation also follows this faster order.}. $\alpha_k$ in \cref{eq:F_1} requires (1) pre-computing $\kappa^{(l)}_{ij}$ for all $i,j \in [K]$ and $l\in [k]$ in $\cO(k K^2 d R_{\max}^2)$ operations, (2) using the recurrence relation in \cref{eq:alpha-recurrence} to compute $B_{k}\left( \kappa^{(1)}_{ij}, \dots, \kappa^{(k)}_{ij} \right)$ for all $i,j \in [K]$ in $\cO(k^2 K^2)$ operations. In total, $\alpha_k$ takes $\cO(k^2 K^2 + k K^2 d R_{\max}^2)$ operations.

The derivative of the Bell polynomials is given by {\cite[Eq. 5.1]{bell1934exponential}}:
\begin{equation}
    \frac{\partial B_k}{\partial x_i}(x_1, \dots, x_k) = \binom{k}{i} B_{k-i} (x_1, \dots, x_{k-i}).
\end{equation}
Using \cref{eq:F_1}, the above identity, and the chain rule, we compute $\nabla_{\pi_j} \alpha_k$, $\nabla_{{\bmu}_j} \alpha_k, \nabla_{V_j}  \alpha_k$:
\begin{align*} 
\nabla_{\pi_j} \alpha_k &= 2 \pi_j \sum_{i=1}^K  B_{k} \left(\kappa^{(1)}_{ij},\dots, 
\kappa^{(k)}_{ij}\right),\numberthis\\
\nabla_{{\bmu}_j} \alpha_k &= 2 \sum_{i=1}^K \sum_{j=1}^K  \pi_i \pi_j \sum_{l=1}^k \binom{k}{l} B_{k-l} \left(\kappa^{(1)}_{ij},\dots, 
\kappa^{(k-l)}_{ij}\right) \nabla_{{\bmu}_j} \kappa^{(l)}_{ij},\numberthis\\
\nabla_{V_j} \alpha_k &= 2 \sum_{i=1}^K \sum_{j=1}^K \pi_i \pi_j \sum_{l=1}^k \binom{k}{l} B_{k-l} \left(\kappa^{(1)}_{ij},\dots, 
\kappa^{(k-l)}_{ij}\right) \nabla_{V_j} \kappa^{(l)}_{ij}.\numberthis
\end{align*}
From the cumulants in \cref{eq:cumulants}, we get the gradients of the cumulants w.r.t.~${\bmu}_j$ and $V_j$:
\begin{equation}
\label{eq:cumulants-gradients-mu}
    \nabla_{{\bmu}_j} \kappa^{(l)}_{ij} = 
    \begin{cases*}
    {\bmu}_i , & ($l = 1$) \\
    l! \left(V_i {V_i}^T V_j {V_j}^T\right)^{\frac{l-2}{2}} V_i {V_i}^T {\bmu}_j, & ($l$ is even) \\
 l! \left(V_i {V_i}^T V_j {V_j}^T\right)^{\frac{l-1}{2}} {\bmu}_i, & ($l$ is odd) \\
     \end{cases*}
\end{equation}
\begin{align}
\label{eq:cumulants-gradients-v}
\nabla_{V_j} \kappa^{(l)}_{ij}  &= \begin{cases*}
  0 , & ($l = 1$) \\
    l! \left(V_i {V_i}^T V_j {V_j}^T\right)^{\frac{l-2}{2}} V_i {V_i}^T V_j \\
    \quad + l! \sum_{p=0}^{\frac{l-2}{2}} \left(V_i {V_i}^T V_j {V_j}^T\right)^p {\bmu}_i {{\bmu}_i}^T \left(V_j {V_j}^T V_i {V_i}^T\right)^{\frac{l-2}{2} - p} V_j \\
    \quad +  l! \sum_{p=0}^{\frac{l-4}{2} } V_i {V_i}^T \left(V_j {V_j}^T V_i {V_i}^T\right)^p {\bmu}_j {{\bmu}_j}^T \left(V_i {V_i}^T V_j {V_j}^T\right)^{\frac{l-4}{2} - p} V_i {V_i}^T V_j, & ($l$ is even) \\
 l! \sum_{p=0}^{\frac{l-1}{2}-1} V_i {V_i}^T \left(V_j {V_j}^T V_i {V_i}^T\right)^p {\bmu}_j {{\bmu}_i}^T \left(V_j {V_j}^T V_i {V_i}^T\right)^{\frac{l-3}{2}-p} V_j \\
 \quad + l!  \sum_{p=0}^{\frac{l-3}{2}}  \left(V_i {V_i}^T V_j {V_j}^T\right)^p {\bmu}_i {{\bmu}_j}^T \left(V_i {V_i}^T V_j {V_j}^T\right)^{\frac{l-1}{2}-1-p} V_i {V_i}^T V_j. & ($l$ is odd)
     \end{cases*}
\end{align}
$\nabla_{\pi_j} \alpha_k$ requires (1) pre-computing $\kappa^{(l)}_{ij}$ for all $i \in [K]$, $l\in [k]$ in $\cO(k K d R_{\max}^2)$ operations, (2) using the recurrence relation in \cref{eq:alpha-recurrence} to compute $B_{k}\left( \kappa^{(1)}_{ij}, \dots, \kappa^{(k)}_{ij} \right)$ for all $i \in [K]$ in $\cO(k^2 K)$ operations. In total, $\nabla_{\pi_j} \alpha_k$ takes $\cO(k^2 K + k K d R_{\max}^2)$ operations. $\nabla_{{\bmu}_j} \alpha_k$ requires (1) pre-computing $\kappa^{(l)}_{ij}$ for all $i,j \in [K]$, $l\in [k]$ in $\cO(k K^2 d R_{\max}^2)$ operations, (2) pre-computing $\nabla_{{\bmu}_j} \kappa^{(l)}_{ij}$ for all $i,j \in [K]$, $l\in [k]$ in $\cO(k K^2 d R_{\max})$, (3) using the recurrence relation in \cref{eq:alpha-recurrence} to compute $B_{k}\left(\kappa^{(1)}_{ij}, \dots, \kappa^{(k)}_{ij} \right)$ for all $i \in [K]$ in $\cO(k^2 K^2)$ operations. In total, $\nabla_{{\bmu}_j} \alpha_k$ takes $\cO(k^2 K^2 + k K^2 d R_{\max}^2)$ operations. With a similar calculation, $\nabla_{V_j} \alpha_k$ takes $\cO(k^2 K^2 + k K^2 d R_{\max}^2)$ operations.
\space\end{proofEnd}

\begin{theoremEnd}[end]{theorem}[Computational complexity of $\beta_{k,n}$ and its gradients]
\label{thm:computational-complexity-inner}
\begin{align}
\beta_{k,n} \coloneq \inner{\cM^{(k)}({\btheta})}{\by_{n}^{\otimes k}} =  \sum_{j=1}^K \left\langle \cM^{(k)}({\btheta}_j), {\by_n}^{\otimes k}\right\rangle = \sum_{j=1}^K \pi_j B_{k}\left({{\by_n}}^T {\bmu}_j, {{\by_n}}^T V_j {V_j}^T {\by_n}, 0, \dots, 0\right)
\end{align}
and can be computed in $\cO(k K +  K dR_{\max})$ operations. Its gradients $\nabla_{\pi_j} \beta_{k,n}, \nabla_{{\bmu}_j} \beta_{k,n}, \nabla_{V_j} \beta_{k,n}$ can each be computed in $\cO(k K +  K dR_{\max})$ operations.
\end{theoremEnd}
\begin{proofEnd}
    \label{pf:computational-complexity-inner}
We compute $\beta_{k,n}$ by applying \cref{prop:mom-cum-bell}:
\begin{align}
\beta_{k,n} = \inner{\cM^{(k)}({\btheta})}{\by_{n}^{\otimes k}} =  \sum_{j=1}^K \left\langle \cM^{(k)}({\btheta}_j), {\by_n}^{\otimes k}\right\rangle = \sum_{j=1}^K \pi_j B_{k}\left({{\by_n}}^T {\bmu}_j, {{\by_n}}^T V_j {V_j}^T {\by_n}, 0, \dots, 0\right).\label{eq:F_2}
\end{align}
The Bell polynomials are evaluated by applying the recurrence relation:
\begin{align}
\label{eq:beta-recurrence}
    \begin{cases*}
    B_{0}\left({{\by_n}}^T {\bmu}_j, {{\by_n}}^T V_j {V_j}^T {\by_n}, 0, \dots, 0\right) = 1, & (base case)\\
    B_{1}\left({{\by_n}}^T {\bmu}_j, {{\by_n}}^T V_j {V_j}^T {\by_n}, 0, \dots, 0\right) = {{\by_n}}^T {\bmu}_j, & (base case) \\
    B_{k}\left({{\by_n}}^T {\bmu}_j, {{\by_n}}^T V_j {V_j}^T {\by_n}, 0, \dots, 0\right) =  \\
    \quad \quad \quad \quad \quad \mkern9mu B_{k-1}\left({{\by_n}}^T {\bmu}_j, {{\by_n}}^T V_j {V_j}^T {\by_n}, 0, \dots, 0\right)  {{\by_n}}^T {\bmu}_j \\
    \quad \quad \quad \quad \quad \mkern9mu + (k-1) B_{k-2}\left({{\by_n}}^T {\bmu}_j, {{\by_n}}^T V_j {V_j}^T {\by_n}, 0, \dots, 0\right) {{\by_n}}^T V_j {V_j}^T {\by_n}. & (induction step)
     \end{cases*}
\end{align}
Using \cref{eq:F_2}, the chain rule, and the derivative of the Bell polynomials, we compute $\nabla_{\pi_j} \beta_{k,n}$, $\nabla_{{\bmu}_j} \beta_{k,n}, \nabla_{V_j} \beta_{k,n}$ for each $j$-th component:
\begin{align}
\nabla_{\pi_j} \beta_{k,n} &= B_{k}\left({{\by_n}}^T {\bmu}_j, {{\by_n}}^T V_j {V_j}^T {\by_n}, 0, \dots, 0\right), \\
\nabla_{{\bmu}_j} \beta_{k,n} &= k \pi_j  B_{k-1}\left({{\by_n}}^T {\bmu}_j, {{\by_n}}^T V_j {V_j}^T {\by_n}, 0, \dots, 0\right) {\by_n},  \\
\nabla_{V_j} \beta_{k,n} &= 2 \binom{k}{2} \pi_j B_{k-2}\left({{\by_n}}^T {\bmu}_j, {{\by_n}}^T V_j {V_j}^T {\by_n}, 0, \dots, 0\right){\by_n} {{\by_n}}^T V_j.
\end{align}
For all $j\in [K]$, ${{\by_n}}^T {\bmu}_j$, and ${{\by_n}}^T V_j {V_j}^T {\by_n}, {\by_n} {{\by_n}}^T V_j$ can be pre-computed in $\cO(K d)$, $\cO(K dR_{\max})$, and $\cO(K dR_{\max})$, respectively, by carefully choosing the order of matrix multiplication. For all $j \in [K]$, using the recurrence relation in \cref{eq:alpha-recurrence}, we can compute $B_{k}\left({{\by_n}}^T {\bmu}_j, {{\by_n}}^T V_j {V_j}^T {\by_n}, 0, \dots, 0\right)$  in $\cO(k K)$ operations. In total, $\beta_{k,n}, \nabla_{\pi_j} \beta_{k,n}, \nabla_{{\bmu}_j} \beta_{k,n}$, and $\nabla_{V_j} \beta_{k,n}$, each requiring $\cO(k K +  K dR_{\max})$ operations. 
\space\end{proofEnd}



\begin{theoremEnd}[end]{theorem}[Computational complexity of $\sum_{n^\prime} \gamma_{k,n,n^\prime}$]
\label{thm:computational-complexity-inner-prod-sum}
Define the polynomial kernel $h^{(k)}(\by_n,\by_{n^\prime}) =  \langle \by_n,\by_{n^\prime}\rangle^{k}$ and its associated Gram matrix $H^{(k)}\in\R^{N\times N}$, where $H^{(k)}_{n,n^\prime} = \inner{\by_n}{\by_{n^\prime}}^k$ for $\by_n, \by_{n^\prime} \iid$  \cref{model:heteroscedastic}. Using Nystr\"om approximation\footnote{Standard treatments of Nyström-based kernel approximation can be found in \cite{williams2000using, drineas2005nystrom}.} with $m$ landmarks from kernel $k$-means++ sampling as in \cite{oglic2017nystrom}, $\sum_{n^\prime}\gamma_{k,n,n^\prime}$ can be approximated in $\cO(Ndm)$ operations, with the expected error bound for the Nystr\"om approximation given in \cref{lemma:approximation-error-kernel}. 
\end{theoremEnd}
\begin{proofEnd}
\label{pf:computational-complexity-inner-prod-sum}
The sum $\sum_{n^\prime} \gamma_{k,n,n^\prime}$ can be approximated in the following steps: 
\begin{enumerate}
\item Select $m$ landmarks from $N$ points using $k$-means++ sampling proposed in \cite{oglic2017nystrom}, which takes $\cO(Nd m)$ operations ($d$-dimensional distances for $N$ points for $m$ centroids).
\item Compute $C^{(k)}$, an $N$-by-$m$ matrix containing kernel evaluations between all $N$ data points and the $m$ landmarks, which takes $\cO(N d m)$ operations. 
\item Compute $W^{(k)}$, an $m$-by-$m$ Gram matrix on the $m$ landmarks for the $k$-th order moment, which takes $\cO(m^2 d)$ operations.
\item Apply the randomly pivoted Cholesky decomposition proposed in \cite[Algorithm 1]{chen2025randomly}) to factorize $W^{(k)} \approx L^{(k)} \left(L^{(k)}\right)^T$, where $L^{(k)}$ is the Cholesky factor of rank $R_{\text{Chol}} < m$. This takes $\cO\left(m R_{\text{Chol}}^2\right)$ operations.
\item To compute the inverse $\left(W^{(k)}\right)^{-1} \bv$ (which is required for the Nyström approximation), we perform a forward and back substitution through the triangular factor: $L^{(k)} \bz = \bv$ (forward substitution) and then $\left(L^{(k)}\right)^T \by = \bz$ (back substitution). This step takes $\cO(m^2)$ operations.
\item After obtaining the solution vector $\by$ from the triangular solve step, we compute the approximate $\sum_{n^\prime}\gamma_{k,n,n^\prime} \approx C^{(k)} \by$, which takes $\cO(N m)$ operations.
\item[] Thus, the overall computational complexity is $\cO\left(Ndm + m^2 d + mR_{\text{Chol}}^2  + m^2 + Nm \right)$. Taking the dominant terms, we get $\cO\left(Ndm\right)$.
\end{enumerate}
\end{proofEnd}

\begin{theoremEnd}[end]{lemma}[Nystr\"om approximation expected error bound]
\label{lemma:approximation-error-kernel}
Let ${H^{(k)}_m}$ be the optimal rank-$m$ approximation of $H^{(k)}$ and $\widetilde{H^{(k)}_m}$ be the Nystr\"om approximation of $H^{(k)}$ using $m$ landmarks from kernel $k$-means++ sampling as in \cite{oglic2017nystrom}. Denote by $\lambda^{(k)}_{i}$ the $i$‑th eigenvalue of $H^{(k)}$. Suppose that $H^{(k)}$ exhibits polynomial spectral decay,\footnote{This is a typical assumption for Nystr\"om approximation. See, e.g., \cite[Section 3]{oglic2017nystrom} and \cite[Section 4.3]{bach2013sharp}.} i.e., $\lambda_i^{(k)} \in \cO(i^{-a})$ with $a>1$ and that the clustering potential condition of \cite{oglic2017nystrom} holds, i.e.,
\begin{align}
\label{ineq:cluster-aligned-condition}
\phi\left(C^{*} \mid U_{m}\right) \leq \sqrt{N-m}\norm{H^{(k)}- H^{(k)}_m}_F,
\end{align}
where $C^{*}$ is an optimal $(m+1)$-clustering, $U_{m}$ is the span of the top $m$ eigenvectors of $H^{(k)}$, and $\phi\left(C^{*} \mid U_{m}\right)$ is the clustering potential (\cite{ding2004k, boutsidis2009unsupervised}) given by the projections of $C^{*}$ onto the subspace $U_{m}$. Then we have the following expected error bound:
\begin{align}
 \EX\left[\norm{H^{(k)}-\widetilde{H^{(k)}_m}}_F\right]\in \cO\left(\ln(m) \sqrt{N - m}\left(m^{-a + 1/2}\right) \right),
\end{align}
where the expectation is taken over random choice of landmarks. 
In particular, $\EX\left[\norm{H^{(k)}-\widetilde{H^{(k)}_m}}_F\right] \to 0$ as $m \to \rank H^{(k)}$, which has the algebraic bound in \cref{lemma:rank-algebraic-bound}.
\end{theoremEnd}
\begin{proofEnd}
\label{pf:approximation-error-kernel}
Applying \cite[Theorem 5]{oglic2017nystrom} to our problem, we get
\begin{align}
\label{ineq:fraction-error-bound}
  \EX\left[\frac{\norm{H^{(k)} - \widetilde{H^{(k)}_m}}_F}{\norm{H^{(k)} - H^{(k)}_m}_F}\right] &\leq 8\left(\ln(m+1)+2\right) \left(\sqrt{N - m}+ \frac{\phi(C^{*}\mid U_{m})}{ \norm{H^{(k)}- H^{(k)}_m}_F}\right).
\end{align}

Under polynomial spectral decay of $H^{(k)}$, $\lambda_i^{(k)} \in \cO(i^{-a})$ with $a>1$, i.e., there exists some constant $C > 0$ such that for large $i$, $\lambda_i^{(k)} \leq C i^{-a}$ for $a>1$. By the integral test, we have
\begin{align*}
\norm{H^{(k)}-H^{(k)}_{m}}_F^2 = \sum_{i= m+1}^{N}\left(\lambda_{i}^{(k)}\right)^2 &\leq \sum_{i= m+1}^{N}\left(C i^{-a}\right)^2 \leq C^2 \int_{m}^{N} x^{-2a} \mathrm{d}x\\
&= \frac{C^2}{2a-1}\left(m^{-2a+1} - N^{-2a+1}\right) \leq \frac{C^2}{2a-1}\left(m^{-2a+1}\right).
\end{align*}

After taking the square root on both sides, we get $\norm{H^{(k)}-H^{(k)}_{m}}_F \leq \frac{C}{\sqrt{2a-1}}\left(m^{-a+1/2}\right)$. Combining this, \cref{ineq:fraction-error-bound}, and the cluster potential condition in \cref{ineq:cluster-aligned-condition}, we get
\begin{align*}
\EX\left[\norm{H^{(k)} - \widetilde{H^{(k)}_m}}_F\right] &\leq 8\left(\ln(m+1)+2\right) \left(\sqrt{N - m} \norm{H^{(k)}- H^{(k)}_m}_F + \phi(C^{*}\!\mid U_{m})\right)\\
&\leq 8 \left(\ln(m+1)+2\right) \left(2\sqrt{N - m} \norm{H^{(k)}- H^{(k)}_m}_F\right)\\
&\leq 16 \left(\ln(m+1)+2\right) \sqrt{N - m} \frac{C}{\sqrt{2a-1}}\left(m^{-a+1/2}\right),\\
\text{which implies that } & \EX\left[\norm{H^{(k)}-\widetilde{H^{(k)}_{m}}}_{F}\right] \in \cO\left(\ln(m) \sqrt{N- m}\left(m^{-a+1/2}\right)\right). \text{ In particular, } \\
\text{as } m \to \rank H^{(k)}, \text{ }& \text{the expected error goes to zero.}
\end{align*}
\end{proofEnd}

\begin{theoremEnd}[end]{lemma}[Algebraic bound on $\rank H^{(k)}$]\label{lemma:rank-algebraic-bound} $\rank H^{(k)}$ has the following algebraic bound:
\begin{equation}\label{eq:rank-sum}
\rank H^{(k)}\leq \sum_{j=1}^{K}\binom{R_{j}+k-1}{k} \leq K \binom{R_{\max}+k-1}{k}.
\end{equation}
If all mixture components are contained in a single linear subspace $U$ and $\dim U \leq R_{\max}$, then
\begin{equation}\label{eq:rank-single}
\rank H^{(k)}\leq \binom{R_{\max}+k-1}{k}.
\end{equation}
\end{theoremEnd}

\begin{proofEnd}
\label{pf:rank-algebraic-bound}
For a homogeneous polynomial kernel of degree $t$ in $\R^{r}$, for some $r \leq R_{\max}$, the associated feature map $\psi: \R^{r} \to\R^{D}$ can be derived via $\langle \bx, \by\rangle^t = \left(x_1 y_1 + \cdots + x_r y_r\right)^t.$ A classic result is that the number of distinct monomials of total degree $t$ in $r$ variables is $\binom{r + t - 1}{t}$ by a combinatorial argument. Therefore, the dimension $D$ of the feature space of homogeneous polynomials of degree $t$ in $r$ variables is $D = \binom{r + t - 1}{t}.$

Since $\by_{1},\dots,\by_{n} \iid$ \cref{model:heteroscedastic}, by the characterization of its distribution given in \cref{prop:pdf-single-gaussian-lowrank}, $\by_{1},\dots,\by_{n}$ lie in a union of $K$ affine subspaces, $\bigcup_{j=1}^{K} U_{j}, \quad \dim U_{j}=R_{j}\leq R_{\max}.$
For each $R_j$-dimensional subspace $U\subset\R^{d}$,
the image of $U_j$ under the degree-$k$ feature map $\psi^{(k)}: \R^{R_j} \to\R^{D}$ is contained in the symmetric tensor space $\sS^k(\R^{R_j})$, whose dimension is $\dim \sS^k(\R^{R_j}) = \binom{R_j+k-1}{k}.$ This implies that the sample points belonging to $U_{j}$ (corresponding to the $j$-th mixture component) span at most $\binom{R_{j}+k-1}{k}$ feature directions. The global feature span is therefore contained in the direct sum of these $K$ subspaces, with dimension bounded by $\sum_{j=1}^{K}\binom{R_{j}+k-1}{k}$. Then, $\rank H^{(k)} \leq \sum_{j=1}^{K}\binom{R_{j}+k-1}{k} \leq K \binom{R_{\max}+k-1}{k}$. If we further assume that $U_j$ is contained in a single linear subspace $U$ for all $j$, the bound reduces to $ \binom{R_{\max}+k-1}{k}$.
\end{proofEnd}

\begin{remark}
To interpret the bounds in \cref{lemma:approximation-error-kernel} and \cref{lemma:rank-algebraic-bound} in practice, the spectral decay of $H^{(k)}$ and the cluster structure in the data should be considered. 
\begin{itemize}
    \item A faster spectral decay leads to a faster expected error decay, which implies that a smaller number of landmarks $m$ is required to achieve the same approximation error. For i.i.d.~data from \cref{model:heteroscedastic}, we observe that $H^{(k)}$ exhibits fast spectral decay. Consequently, the actual number of landmarks required is considerably smaller than the algebraic bound in \cref{lemma:rank-algebraic-bound}. However, for data whose corresponding $H^{(k)}$ does not exhibit spectral decay, the number of landmarks required will be closer to the algebraic bound in \cref{lemma:rank-algebraic-bound}. 
    \item The clustering potential $\phi\left(C^{*} \mid U_{m}\right)$ in \cref{ineq:cluster-aligned-condition} is the sum of within-cluster variances under the optimal cluster assignment after projecting onto $U_m$, which measures how ``clusterable'' the data is. For i.i.d.~data from \cref{model:heteroscedastic}, the inherent cluster structure means that the points are still concentrated around their respective centroids after projecting onto $U_m$, leading to small clustering potential. However, for data without a cluster structure, the clustering potential condition \cref{ineq:cluster-aligned-condition} might not hold.
\end{itemize}
\end{remark}

\begin{theoremEnd}[end]{theorem}[Computational complexity of $\widehat{w}_k^{[t]}$]
\label{thm:computational-complexity-W}
The proposed weights $\widehat{w}_k^{[t]}$ in \cref{eq:d-gmom-multistep} can be expressed in terms of the quantities in \cref{eq:alpha-beta-gamma}:
\begin{align*}
\label{eq:w_k}
 \widehat{w}^{[t]}_k 
&= \frac{ N \sum_{n} \alpha_k^{[t-1]} - 2 \beta_{k,n}^{[t-1]} +\gamma_{k, n, n}}{\sum_{n, n^\prime} \sum_{k^\prime}\left(\alpha_k^{[t-1]} -  \beta_{k,n}^{[t-1]} - \beta_{k,n^\prime}^{[t-1]} + \gamma_{k,n,n^\prime}\right)\left(\alpha_{k^\prime}^{[t-1]} -  \beta_{k^\prime,n}^{[t-1]} - \beta_{k^\prime,n^\prime}^{[t-1]} + \gamma_{k^\prime, n, n^\prime}\right)},\numberthis
\end{align*}
\begin{align}
    \alpha_k^{[t-1]} &\coloneq \norm{\cM^{(k)}\left(\widehat{\btheta}^{[t-1]}\right)}^2, \quad \beta_{k,n}^{[t-1]} \coloneq  \inner{\cM^{(k)}\left(\widehat{\btheta}^{[t-1]}\right)}{\by_n^{\otimes k}}.
\end{align} 
Further, $ \widehat{w}_k^{[t]}$ can be computed in $\cO(L^2 K^2 + L K^2 d R_{\max}^2 + NLK + NKdR_{\max})$ operations.
\end{theoremEnd}

\begin{proofEnd}
\label{pf:computational-complexity-W}
The numerator of $\widehat{w}_k^{[t]}$ in \cref{eq:d-gmom-multistep} can be written as
\begin{equation*}
    \begin{split}
    &\sum_{i \in \cI_k} \widehat{S}^{[t]}_{ii} = \sum_{i \in \cI_k} \left(\frac{1}{N}\sum_{n=1}^{N} g(\widehat{{\btheta}}^{[t-1]}, {\by_n}) g(\widehat{{\btheta}}^{[t-1]}, {\by_n})^T \right)_{ii} = \sum_{i \in \cI_k}\frac{1}{N} \sum_{n=1}^{N} g(\widehat{{\btheta}}^{[t-1]}, {\by_n})_i^2\\
     =& \frac{1}{N} \sum_{n=1}^{N}  \norm{\vectorize{\left(\cM^{(k)}\left(\widehat{{\btheta}}^{[t-1]}\right) - {\by_n}^{\otimes k}\right)}}^2 = \frac{1}{N} \sum_{n=1}^{N} \norm{\cM^{(k)}\left(\widehat{{\btheta}}^{[t-1]}\right) - {\by_n}^{\otimes k}}^2\\
    =& \frac{1}{N} \sum_{n=1}^{N} \norm{\cM^{(k)}\left(\widehat{{\btheta}}^{[t-1]}\right)}^2 - 2 \inner{\cM^{(k)}\left(\widehat{{\btheta}}^{[t-1]}\right)}{{\by_n}^{\otimes k}} +\inner{{\by_n}}{{\by_n}}^k.
    \end{split}
\end{equation*}
The denominator of $\widehat{w}_k^{[t]}$ in \cref{eq:d-gmom-multistep} can be written as
\begin{equation*}
    \begin{split}
    &\sum_{i \in \cI_k}\sum_{j=1}^{q} \left(\widehat{S}^{[t]}_{ij}\right)^2 =  \sum_{i \in \cI_k}\sum_{j=1}^{q} \left(\frac{1}{N}\sum_{n=1}^{N} g(\widehat{{\btheta}}^{[t-1]}, {\by_n}) g(\widehat{{\btheta}}^{[t-1]}, {\by_n})^T \right)_{ij}^2\\
    =   &\frac{1}{N^2}\sum_{n,n^\prime =1}^{N} \sum_{i \in \cI_k} \left(\sum_{j=1}^{q}\left(g(\widehat{{\btheta}}^{[t-1]}, {\by_n}) g(\widehat{{\btheta}}^{[t-1]}, {\by_n})^T \right)_{ij} \left(g(\widehat{{\btheta}}^{[t-1]}, \by_{n^\prime}) g(\widehat{{\btheta}}^{[t-1]},\by_{n^\prime})^T \right)_{ij} \right)\\
    =  &\frac{1}{N^2}\sum_{n,n^\prime =1}^{N} \inner{g(\widehat{{\btheta}}^{[t-1]}, {\by_n})[i_k:{i_k}^{\prime}]}{g(\widehat{{\btheta}}^{[t-1]}, \by_{n^\prime})[i_k:{i_k}^{\prime}]} \Tr(g(\widehat{{\btheta}}^{[t-1]}, {\by_n})g(\widehat{{\btheta}}^{[t-1]}, \by_{n^\prime})^T)\\
    =  &\frac{1}{N^2}\sum_{n,n^\prime =1}^{N} \inner{g(\widehat{{\btheta}}^{[t-1]}, {\by_n})[i_k:{i_k}^{\prime}]}{g(\widehat{{\btheta}}^{[t-1]}, \by_{n^\prime})[i_k:{i_k}^{\prime}]} \Tr(g(\widehat{{\btheta}}^{[t-1]}, \by_{n^\prime})^T g(\widehat{{\btheta}}^{[t-1]}, {\by_n}))\\
    =  &\frac{1}{N^2}\sum_{n,n^\prime =1}^{N}  \inner{g(\widehat{{\btheta}}^{[t-1]}, {\by_n})[i_k:{i_k}^{\prime}]}{g(\widehat{{\btheta}}^{[t-1]}, \by_{n^\prime})[i_k:{i_k}^{\prime}]} \inner{g(\widehat{{\btheta}}^{[t-1]}, {\by_n}))}{g(\widehat{{\btheta}}^{[t-1]}, \by_{n^\prime})}\\
    =  &\frac{1}{N^2}\sum_{n,n^\prime =1}^{N}  \inner{\cM^{(k)}\left(\widehat{{\btheta}}^{[t-1]}\right) - {\by_n}^{\otimes k }}{\cM^{(k)} - \by_{n^\prime}^{\otimes k }} \sum_{k^\prime=1}^L \inner{\cM^{(k^\prime)}\left(\widehat{{\btheta}}^{[t-1]}\right) - {\by_n}^{\otimes k^\prime}}{\cM^{(k^\prime)} - \by_{n^\prime}^{\otimes k^\prime}}
    \end{split}
\end{equation*}
Combining the numerator and the denominator, substituting with the quantities in \cref{eq:alpha-beta-gamma}, and simplifying, we get the DGMM weights:
\begin{align*}
 \widehat{w}^{[t]}_k 
&= \frac{ N \sum_{n} \alpha_k^{[t-1]} - 2 \beta_{k,n}^{[t-1]} +\gamma_{k, n, n}}{\sum_{n, n^\prime} \sum_{k^\prime}\left(\alpha_k^{[t-1]} -  \beta_{k,n}^{[t-1]} - \beta_{k,n^\prime}^{[t-1]} + \gamma_{k,n,n^\prime}\right)\left(\alpha_{k^\prime}^{[t-1]} -  \beta_{k^\prime,n}^{[t-1]} - \beta_{k^\prime,n^\prime}^{[t-1]} + \gamma_{k^\prime, n, n^\prime}\right)}.
\end{align*}
Observe that $\widehat{w}^{[t]}_k$ requires (1)
$\alpha_k^{[t-1]} = \norm{\cM^{(k)}\left(\widehat{{\btheta}}^{[t-1]}\right)}^2, k = 1, \dots, L,$ which, by \cref{thm:computational-complexity-norm}, takes $\cO(L^2 K^2 + L K^2 d R_{\max}^2)$ operations, and (2) $\beta_{k,n}^{[t-1]} = \inner{\cM^{(k)}\left(\widehat{{\btheta}}^{[t-1]}\right)}{\by_{n}^{\otimes k}}, k = 1, \dots, L, n = 1, \dots, N,$ which, by \cref{thm:computational-complexity-inner}, takes $\cO(NLK + NKdR_{\max})$ operations. In total, $\widehat{w}^{[t]}_k$ in \cref{eq:d-gmom-multistep} can be computed in $\cO(L^2 K^2 + L K^2 d R_{\max}^2 + NLK + NKdR_{\max})$.
\space\end{proofEnd}

\begin{theoremEnd}[end]{theorem}[Computational complexity of $\nabla_{{\btheta}} Q^{[t]}_N(\btheta) $]
\label{thm:computational-complexity-gradients}
At each iteration of the first-order solver, the gradients $\nabla_{{\btheta}} Q^{[t]}_N(\btheta) $ can be expressed in terms of the quantities in \cref{eq:alpha-beta-gamma}:
\begin{equation}
\label{eq:gradients-efficient}
\begin{aligned}
   &\nabla_{{\btheta}} Q^{[t]}_N(\btheta)  = \sum_{k=1}^L \widehat{w}^{[t]}_k  \left[\nabla_{{\pi}_1}\alpha_k; \dots; \nabla_{{\pi}_K}\alpha_k; \nabla_{{\bmu}_1}\alpha_k; \dots; \nabla_{{\bmu}_K}\alpha_k; \vectorize\left(\nabla_{V_1}\alpha_k\right); \dots; \vectorize\left(\nabla_{V_K}\alpha_k\right)\right]^T \\
    & -\frac{2}{N}\sum_{n=1}^N \left[\nabla_{{\pi}_1} \beta_{k,n}; \dots; \nabla_{{\pi}_K} \beta_{k,n}; \nabla_{{\bmu}_1} \beta_{k,n}; \dots; \nabla_{{\bmu}_K} \beta_{k,n}; \vectorize\left(\nabla_{V_1} \beta_{k,n}\right); \dots; \vectorize\left(\nabla_{V_K} \beta_{k,n}\right)\right]^T.
\end{aligned}
\end{equation}
Further, $\nabla_{{\btheta}} Q^{[t]}_N(\btheta) $ can be computed in $\cO(L^2 K^2 + L K^2 dR_{\max}^2 + NLK + NKdR_{\max})$.
\end{theoremEnd}

\begin{proofEnd}
\label{pf:computational-complexity-gradients}
We rewrite the DGMM objective function in \cref{eq:d-gmom-multistep}:
\begin{align*}
Q^{[t]}_N(\btheta)  &= \bar{g}_N\left({\btheta}\right)^T \diag(\underbrace{\widehat{w}^{[t]}_1, \dots, \widehat{w}^{[t]}_1}_{d \text{ copies}}, \dots, \underbrace{\widehat{w}^{[t]}_L, \dots, \widehat{w}^{[t]}_L}_{d^L \text{ copies}})    \bar{g}_N\left({\btheta}\right)\\
=&\sum_{k=1}^L \widehat{w}^{[t]}_k \left(\vectorize\left(\cM^{(k)}\left({\btheta}\right)- \frac{1}{N} \sum_{n=1}^N {\by_n}^{\otimes k }\right)_i\right)^2=\sum_{k=1}^L \widehat{w}^{[t]}_k \norm{\cM^{(k)}\left({\btheta}\right) - \frac{1}{N} \sum_{n=1}^N {\by_n}^{\otimes k }}^2\\
=&\sum_{k=1}^L \widehat{w}^{[t]}_k \left(\norm{\cM^{(k)}\left({\btheta}\right)}^2 - \frac{2}{N} \sum_{n=1}^N\left\langle \cM^{(k)} \left({\btheta}\right), {\by_n}^{\otimes k}\right\rangle + \frac{1}{N^2}\sum_{n=1}^N \sum_{n^{\prime}=1}^N \left\langle {\by_n}^{\otimes k},  \by_{n^{\prime}}^{\otimes k}\right\rangle\right),\\
=&\sum_{k=1}^L \widehat{w}^{[t]}_k  \left(\alpha_k - \frac{2}{N} \sum_{n=1}^N \beta_{k,n} + \frac{1}{N^2}\sum_{n=1}^N \sum_{n^{\prime}=1}^N \gamma_{k,n,n^\prime}\right).\numberthis\label{eq:gmm-objective-rewrite}
\end{align*} 
Observe that $\nabla_{{\btheta}} Q^{[t]}_N(\btheta)$ requires the following gradients: $\nabla_{\pi_j}\alpha_k, \nabla_{{\bmu}_j}\alpha_k,  \mkern9mu  \nabla_{V_j} \alpha_k$ and $\nabla_{\pi_j} \beta_{k,n}, \nabla_{{\bmu}_j} \beta_{k,n},  \nabla_{V_j} \beta_{k,n}$, for $k = 1, \dots, L, n = 1, \dots, N$, which, by \cref{thm:computational-complexity-norm} and \cref{thm:computational-complexity-inner}, takes $\cO(L^2 K^2 + LK^2 dR_{\max}^2 + NLK + NKdR_{\max})$ operations in total. $\nabla_{{\btheta}} Q^{[t]}_N(\btheta)$ is then obtained by vectorizing and concatenating these gradients.
\space\end{proofEnd}

Finally, we compare the computational complexity of DGMM with MM and GMM:

\begin{theoremEnd}[end]{theorem}[Overall computational complexity of DGMM]
\label{thm:computational-complexity-comparison}
The DGMM estimator in \cref{eq:d-gmom-multistep} is obtained in $\cO(L^2 K^2 + L K^2 dR_{\max}^2 + NLK + NKdR_{\max} + NLdm)$ operations. In contrast, the MM estimator is obtained in $\cO(Nd^{L+1}KR_{\max})$ operations, and the GMM estimator is obtained in $\cO(Nd^{2L} + d^{3L} + Nd^{L+1}KR_{\max} + d^{2L+1}KR_{\max})$ operations. 


\end{theoremEnd}

\begin{proofEnd}
\label{pf:computational-complexity-comparison}
Let $I$ be the number of L-BFGS iterations, $T$ the number of GMM steps. The DGMM estimator, $\widehat{\btheta}^{(\dgmm)}$, requires the following computations:
\begin{enumerate}
    \item For one time, pre-compute $\sum_{n^\prime = 1}^N \gamma_{k,n,n^\prime}, \text{ for } k = 1, \dots, 2L, n = 1, \dots, N,$ and store in a $(2L+1)\times N$ matrix. By \cref{thm:computational-complexity-inner-prod-sum}, this requires $\cO(NLdm)$ operations.
\item For at most $T$ times, compute $\widehat{w}_k^{[t-1]}$ for $k = 1, \dots, L$, which, by \cref{thm:computational-complexity-W}, takes $\cO(T L^2 K^2 + T L K^2 d R_{\max}^2 + T NLK + T NKdR_{\max})$ operations. 
 \item For at most $I$ times, compute the necessary gradients, which, by \cref{thm:computational-complexity-gradients}, takes $\cO(I L^2 K^2 + I L K^2 dR_{\max}^2 + I NLK + I NKdR_{\max})$ operations. 
\end{enumerate}
Taking the dominant terms and absorbing the stopping criteria constants, we get that the overall computational complexity of the DGMM estimator ${{\widehat{\btheta}}^{(\dgmm)}}$ in \cref{eq:d-gmom-multistep} is $\cO(L^2 K^2 + L K^2 dR_{\max}^2 + NLK + NKdR_{\max} + NLdm)$.

In contrast, the MM estimator, ${\widehat{\btheta}}^{(\mm)}$, requires the following computations:
\begin{enumerate}
\item For one time, pre-compute the sample moments, which takes $\cO(Nq)$, where $q = d + d^2 + \cdots + d^L$. Taking the dominant terms, we get $\cO(N d^{L})$.
\item The weighting matrix is the identity matrix and therefore takes $O(1)$ time to compute. 
\item For at most $I$ times, compute the moment condition vector $\bar{g}_N(\btheta)$ in $\cO(Nq)$ operations, the objective function  $\bar{g}_N(\btheta)^T \bar{g}_N(\btheta)$ in $\cO(q^2)$ operations, the Jacobian matrix $G(\btheta) = \frac{\partial \bar{g}_N(\btheta)}{\partial \btheta} \in \R^{q\times p}$ in $\cO(Nqp)$ operations, and the gradients $\nabla_{\btheta} \bar{g}_N(\btheta)^T\bar{g}_N(\btheta) = 2 G(\btheta)^T \bar{g}_N(\btheta)$ in $\cO(qp)$ operations. Taking the dominant term and absorbing the stopping criteria constants, we get $\cO(INd^{L+1}KR_{\max}+ Id^{L+1}K R_{\max}) = \cO(Nd^{L+1}KR_{\max})$.
\end{enumerate}

Similarly, the GMM estimator, ${\widehat{\btheta}}^{(\gmm)}$, requires the following computations:
\begin{enumerate}
\item For one time, pre-compute the sample moments, which takes $\cO(N d^{L})$.
\item For at most $T$ times, compute $\widehat{S}^{[t]}$ in $\cO(N d^{2L})$ and invert $\widehat{S}^{[t]}$ in $\cO(d^{3L})$. 
\item For at most $I$ times, compute the moment condition vector $\bar{g}_N(\btheta)$ in $\cO(Nq)$ operations, the objective function $\bar{g}_N(\btheta)^T \widehat{W}^{[t]}\bar{g}_N(\btheta)$ in $\cO(q^2)$ operations, the Jacobian matrix $G(\btheta) = \frac{\partial \bar{g}_N(\btheta)}{\partial \btheta}$ in $\cO(Nqp)$ operations, and the gradients $\nabla_{\btheta} \bar{g}_N(\btheta)^T \widehat{W}^{[t]} \bar{g}_N(\btheta) = 2 G(\btheta)^T \widehat{W}^{[t]} \bar{g}_N(\btheta)$ in $\cO(q^2 p)$ operations. Taking the dominant term, we get $\cO(INd^{L+1}KR_{\max}+ Id^{2L+1}K R_{\max})$.
\item 
The overall computational complexity of obtaining ${{\widehat{\btheta}}^{(\gmm)}}$ is $\cO(Nd^{2L} + d^{3L} + Nd^{L+1}KR_{\max}+ d^{2L+1}K R_{\max})$ after absorbing the stopping criteria constants.
\end{enumerate}
\end{proofEnd}

\section{Numerical studies}
\label{sec:numerical-studies}
In this section, we demonstrate the empirical performance of the DGMM estimator, in comparison with MM and GMM. The goal is to estimate the parameters of \cref{model:heteroscedastic} given the number of components $K$, the maximum rank $R_{\max}$, and $\{\by_n\}_{n=1}^N \in \R^d \iid$ \cref{model:heteroscedastic}. We consider two cases: when the covariances of the mixture components share identical rank, i.e., $R_i = R_j$ for all $i,j \in [K]$ in \cref{subsec:same-rank}, and when the covariances of the mixture components have non-identical ranks, i.e., $R_i \neq R_j$ for some $i,j$ in \cref{subsec:different-ranks}. We compare MM, GMM, and DGMM in terms of (1) computational performance, as measured by L-BFGS iterations and average runtime, and (2) statistical performance, as measured by average relative estimation error in the mixing probabilities, centers, and covariances, i.e.,
\begin{align*}
   \err_{{\bpi}} = \frac{1}{K} \sum_{j=1}^K \frac{\norm{\widehat{{\bpi}}_j - {\bpi}^{*}_j}_2}{\norm{{\bpi}^{*}_j}_2}, \quad  \err_{{\bmu}} = \frac{1}{K} \sum_{j=1}^K \frac{\norm{\widehat{{\bmu}}_j - {\bmu}^{*}_j}_2}{\norm{{\bmu}^{*}_j}_2}, \quad \err_{\Sigma} = \frac{1}{K} \sum_{j=1}^K \frac{\norm{\widehat{\Sigma}_j - \Sigma^{*}_j}_2}{\norm{\Sigma^{*}_j}_2}.
\end{align*} 
We set the stopping criteria for the multi-step estimation: the maximum number of estimation steps $T=10$ and the convergence tolerance $\varepsilon_{{\btheta}}= 10^{-4}$, whichever is met first. Within each step, we re-parameterize the constrained optimization problem in \cref{eq:d-gmom-multistep} to obtain an equivalent unconstrained problem for computational simplicity, using the softmax function\footnote{The softmax re-parameterization and regularization parameter are standard machine learning practices, for which the theoretical justifications can be found in prior works including \cite{bishop1994mixture, bishop1995neural, bishop2006pattern, hinton2015distilling, jang2016categorical, maddison2016concrete}.}:
\begin{equation}
\label{eq:softmax}
    \pi_j = \frac{\exp\left\{(\widetilde{\pi}_j-\widetilde{\pi}_{\max})/\tau\right\}}{\sum_{j^\prime=1}^K \exp\left\{(\widetilde{\pi}_{j^\prime}-\widetilde{\pi}_{\max})/\tau\right\}},
\end{equation}
where $\tau$ is a regularization parameter that improves numerical stability and convergence behavior by controlling how ``peaked'' the distribution of the mixing probabilities becomes after the softmax re-parameterization. 
We then solve the unconstrained problem using L-BFGS \cite{liu1989limited} with the analytic gradients in \cref{eq:gradients-efficient}, but we note that other gradient-based solvers also work in practice. We generate $N = 100000$ random samples from a ground-truth mixture following the specifications of \cref{model:heteroscedastic}:
\begin{itemize}
\label{list:ground-truth-conditions}
\item the ground-truth mixing probabilities $\pi_j^{*} \sim \mathrm{Unif}(0,1)$ and $\sum_j \pi_j^{*} = 1$,
\item the ground-truth centers are drawn uniformly at random from the unit sphere, i.e., ${\bmu}_j^{*} \sim \mathrm{Unif}\left(\{x \in \R^d : \norm{x}_2 = 1\}\right)$,
\item the ground-truth covariance matrix $\Sigma_j^{*} = U_j^{*} \Lambda_j^{*} {U_j^{*}}^T$, for $U_j^{*}$ a $d\times R_j$ random orthonormal matrix, $R_j \sim \mathrm{Unif}(\{1, \dots, R_{\max}\})$, and $\Lambda_j^{*} = \diag({\lambda^{(1)}}^{*}, \dots, {\lambda^{(R_j)}}^{*})$, ${\lambda^{(1)}}^{*}, \dots, {\lambda^{(R_j)}}^{*} \sim \mathrm{Unif}(\lambda_{\min}, \lambda_{\max})$ and $\lambda_{\min}, \lambda_{\max} \gg 1$ to enforce the weak separation condition, i.e., $\norm{\Sigma_j^{*} }_F \gg \norm{{\bmu}_j^{*} }_2$. We use $\lambda_{\min} = 5^2, \lambda_{\max} = 10^2$.
\end{itemize}
In all sets of numerical studies, we use random initialization:
\begin{itemize}
\label{list:initialization-conditions}
    \item $\pi^{[0]}_j = \frac{1}{K}$ for all $j$, that is, the initial guess is a uniform mixture,
    \item ${\bmu}^{[0]}_j \sim \mathrm{Unif}\left(\{x \in \R^d : \norm{x}_2 = 1\}\right)$,
    \item $\Sigma^{[0]}_j = U^{[0]}_j {U^{[0]}_j}^T$, where $U^{[0]}_j$ is a $d\times R_{\max}$ random orthonormal matrix. 
\end{itemize}
All experiments are run on a 2020 MacBook Pro (Apple M1, 16GB RAM). All plots in this section are projections onto the first two dimensions for the purpose of illustration.

\subsection{Case 1: identical covariance ranks}
\label{subsec:same-rank} 
In the first set of numerical studies under the case of identical covariance ranks, we fix $d = 10, K = 2, \bpi^{*} = (0.4, 0.6), R_{1} = R_{2}= R_{\max} =2, L = 3$. The choice $L = 3$ is sufficient in practice and remains below the sixth-moment identifiability bound of \cite{taveira2024gaussian}. We set the maximum number of L-BFGS iterations to be $I = 200$. From a common random initialization, we run MM, GMM, and DGMM and compare their statistical performance  and computational performance. Results are reported in
\cref{tab:comparison-smallexample-samerank,fig:smallexample-samerank}. In terms of statistical performance, DGMM attains the smallest estimation error in mixing probabilities, centers, and covariances. Moreover, we observe that GMM yields even worse estimation errors than MM despite having theoretically optimal asymptotic efficiency --- likely due to the numerical instabilities in estimating the large weighting matrix. Similar empirical observations have also been reported in the econometrics literature, e.g., \cite{altonji1996small}, albeit under a different model. In terms of computational performance, DGMM converges in the fewest number of L-BFGS iterations; the runtime of MM and DGMM show significantly shorter average runtime relative to GMM, with DGMM slightly faster than MM.
\begin{center}
\begin{table}[H]
  \small
\captionsetup{justification=justified, singlelinecheck=off}
\centering
\begin{tabular}[c]{|M{0.115\linewidth}|M{0.12\linewidth}|M{0.115\linewidth}|M{0.12\linewidth}|M{0.12\linewidth}| M{0.24\linewidth}|}
\cline{1-6}
 & \textbf{$\err_{\pi}$}&\textbf{$\err_{\mu}$} & \textbf{$\err_{\Sigma}$} &\textbf{L-BFGS iterations} & \textbf{Runtime (mean ± std. dev. of 7 runs)} \\ 
\cline{1-6}
\textbf{MM} &  0.0070253 & 0.027915 & 0.012440 & 379 & 28.3s ± 2.71s \\ 
\cline{1-6}
\textbf{GMM} &  0.026005 & 0.36873 & 0.052390 & 463 & 11min 43s ± 12.1s \\ 
\cline{1-6}
\textbf{DGMM} &  \textbf{0.0020290} & \textbf{0.027324} & \textbf{0.0058096} & \textbf{174} & \textbf{14.3s ± 89.2ms} \\ 
\cline{1-6}
\end{tabular}
\caption{Comparison across methods when estimating the parameters of a small-scale example of \cref{model:heteroscedastic} with identical covariance ranks.}
\label{tab:comparison-smallexample-samerank}
\end{table}
\end{center}
\begin{figure}[H]
\captionsetup{justification=justified, singlelinecheck=off}
  \centering
 \includegraphics[width=0.9\textwidth]{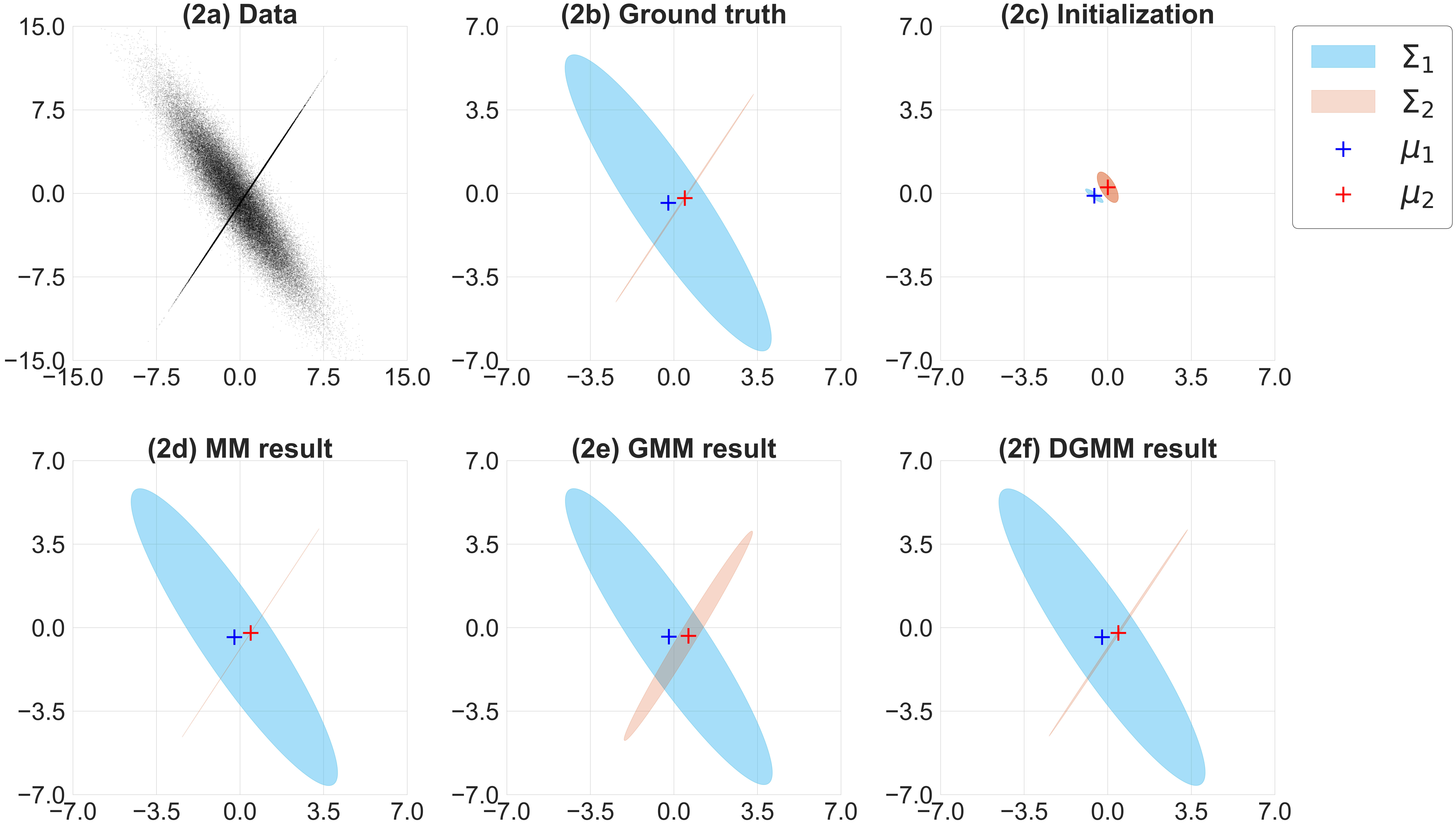}
 \caption{(2a) shows a scatter plot of the data randomly sampled from a small-scale example of \cref{model:heteroscedastic} with $d=10, K = 2, \pi_1^{*} = 0.4, \pi_2^{*} = 0.6$ and identical covariance ranks $R_1 = R_2 = R_{\max} =2$. (2b) shows the ground-truth GM parameters. (2c) shows  the initial GM parameters. (2d)-(2f) show the estimation results of MM, GMM, and DGMM.}\label{fig:smallexample-samerank}
\end{figure}
For a larger–scale example, we increase the problem size to $d = 100, K = 3, \bpi^{*} = (0.2,0.3,0.5), R_{1}=R_{2}=R_{3}= R_{\max} = 3$ and the maximum number of L-BFGS iterations to $I=300$. We observe that this examples requires sample moment conditions of order at least $L = 4$. The standard GMM and MM procedures are computationally infeasible here, especially for GMM due to the inversion of a very large matrix $\widehat{S}^{[t]}$ (as discussed in \Cref{sec:proposed-dgmom}). Therefore, we compare DGMM with a modified MM that uses the implicit moment computation using identical weights. As shown in \cref{tab:comparison-largeexample-samerank} and \cref{fig:largeexample-samerank}, DGMM attains smaller estimation errors, fewer L-BFGS iterations, and shorter average runtime than MM, even though both use the implicit moment computation. 
\begin{center}
\begin{table}[H]
  \small
\captionsetup{justification=justified, singlelinecheck=off}
\centering
\begin{tabular}[c]{|M{0.12\linewidth}|M{0.12\linewidth}|M{0.11\linewidth}|M{0.11\linewidth}|M{0.12\linewidth}| M{0.24\linewidth}|}
\cline{1-6}
 & \textbf{$\err_{\pi}$}&\textbf{$\err_{\mu}$} & \textbf{$\err_{\Sigma}$} &\textbf{L-BFGS iterations} & \textbf{Runtime (mean ± std. dev. of 7 runs)} \\ 
\cline{1-6}
\textbf{MM (implicit)}&1.4378 & 3.3157 & 7.4772 & 3000 & 17min 29s ± 24.9s\\ 
\cline{1-6}
\textbf{DGMM} &  \textbf{0.0048817} & \textbf{0.066793} & \textbf{0.015563} & \textbf{419} & \textbf{2min 51s ± 2.31s} \\ 
\cline{1-6}
\end{tabular}
\caption{Comparison across methods when estimating the parameters of a large-scale example of \cref{model:heteroscedastic} with identical covariance ranks.}
\label{tab:comparison-largeexample-samerank}
\end{table}
\end{center}
\begin{figure}[H]
\captionsetup{justification=justified, singlelinecheck=off}
  \centering
 \includegraphics[width=\textwidth]{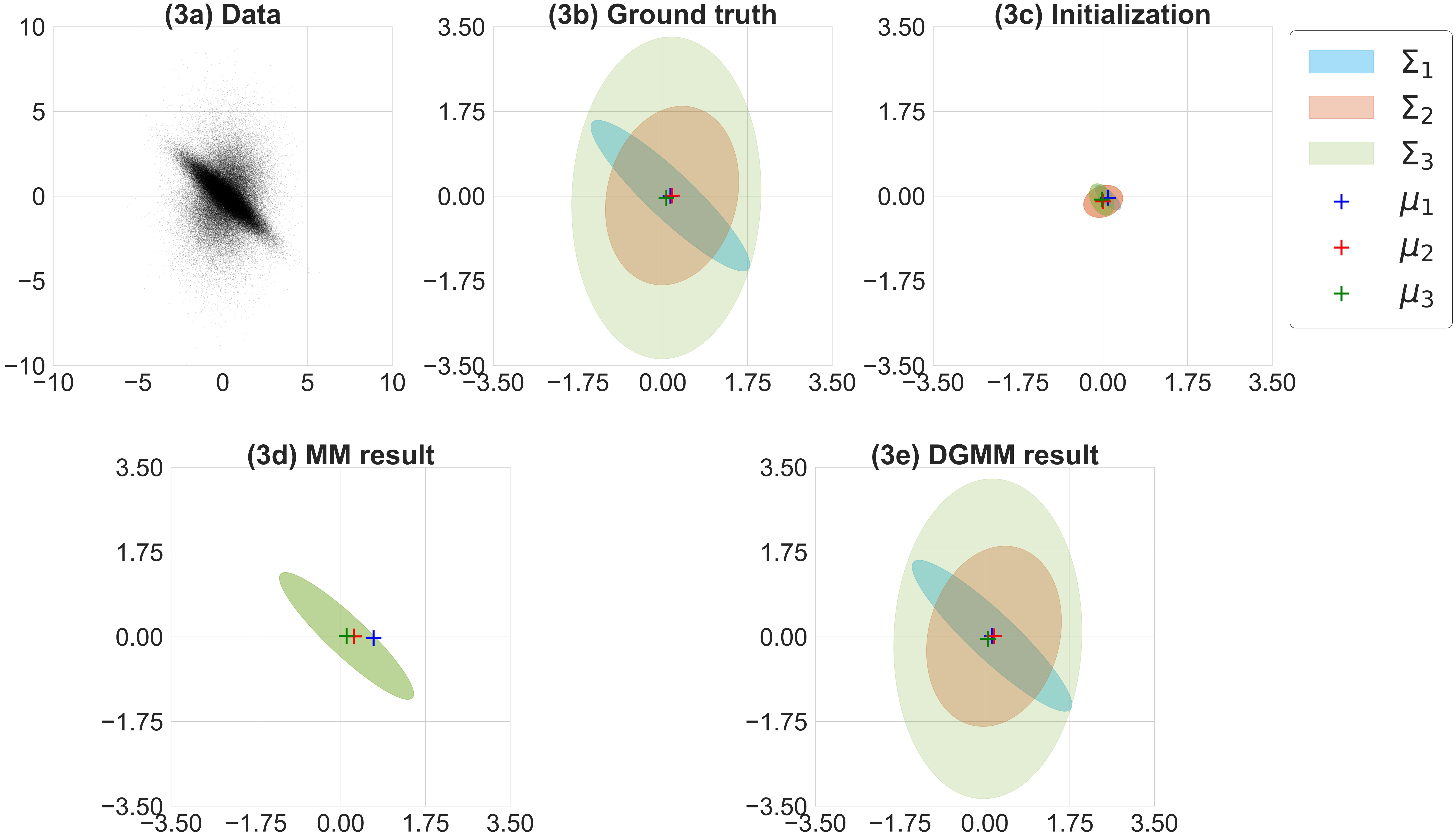}
 \caption{(3a) shows a scatter plot of the data randomly sampled from a large-scale example of \cref{model:heteroscedastic} with $d=100, K = 3, \pi_1^{*} = 0.2, \pi_2^{*} = 0.3, \pi_3^{*} = 0.5$ and identical covariance ranks $R_1 = R_2 = R_3 = R_{\max} = 3$. (3b) shows the ground-truth GM parameters. (3c) shows  the initial GM parameters. (3d) and (3e) show the estimation results of MM and DGMM.}\label{fig:largeexample-samerank}
\end{figure}

\subsection{Case 2: non-identical covariance ranks}
\label{subsec:different-ranks}
The flexibility of \cref{model:heteroscedastic} allows us to study the case when different mixture components have covariance matrices of different ranks. In the first set of numerical studies under this case, we set $d = 10, K = 2, \bpi^{*} = (0.4,0.6), R_{1}=1,R_{2}=2, L = 3$ and $R_{\max} = 2$ without assuming that the individual ranks are known. The results in \cref{tab:comparison-smallexample-differentranks} and \cref{fig:smallexample-differentranks} are similar to those for the case of identical rank in \cref{tab:comparison-smallexample-samerank}, although all methods take considerably longer to converge, as one might expect. Across the methods, DGMM attains the smallest estimation error in the parameters and converges in fewer L-BFGS iterations and shorter average runtime relative to the unweighted counterpart. 
\begin{center}
  \begin{table}[H]
  \small
  \captionsetup{justification=justified, singlelinecheck=off}
  \centering
\begin{tabular}[c]{|M{0.12\linewidth}|M{0.11\linewidth}|M{0.11\linewidth}|M{0.11\linewidth}|M{0.12\linewidth}| M{0.24\linewidth}|}
\cline{1-6}
 & \textbf{$\err_{\pi}$}&\textbf{$\err_{\mu}$} & \textbf{$\err_{\Sigma}$} &\textbf{L-BFGS iterations} & \textbf{Runtime (mean ± std. dev. of 7 runs)} \\ 
\cline{1-6}
\textbf{MM} &  0.17494 & 0.064799 & 0.13736 & 1651 & 2min 29s ± 1.66s \\ 
\cline{1-6}
\textbf{GMM} &  0.39005 & 0.21968 & 0.33879 & 251 & 7min 39s ± 10.7s \\ 
\cline{1-6}
\textbf{DGMM} &  \textbf{0.010973} & \textbf{0.058032} & \textbf{0.019857} & \textbf{839} & \textbf{42.2s ± 396ms} \\ 
\cline{1-6}
\end{tabular}
  \caption{Comparison across methods when estimating the parameters of a small-scale example of \cref{model:heteroscedastic} with non-identical covariance ranks.}
  \label{tab:comparison-smallexample-differentranks}
  \end{table}
  \end{center}
  \begin{figure}[H]
  \captionsetup{justification=justified, singlelinecheck=off}
    \centering
    \includegraphics[width=\textwidth]{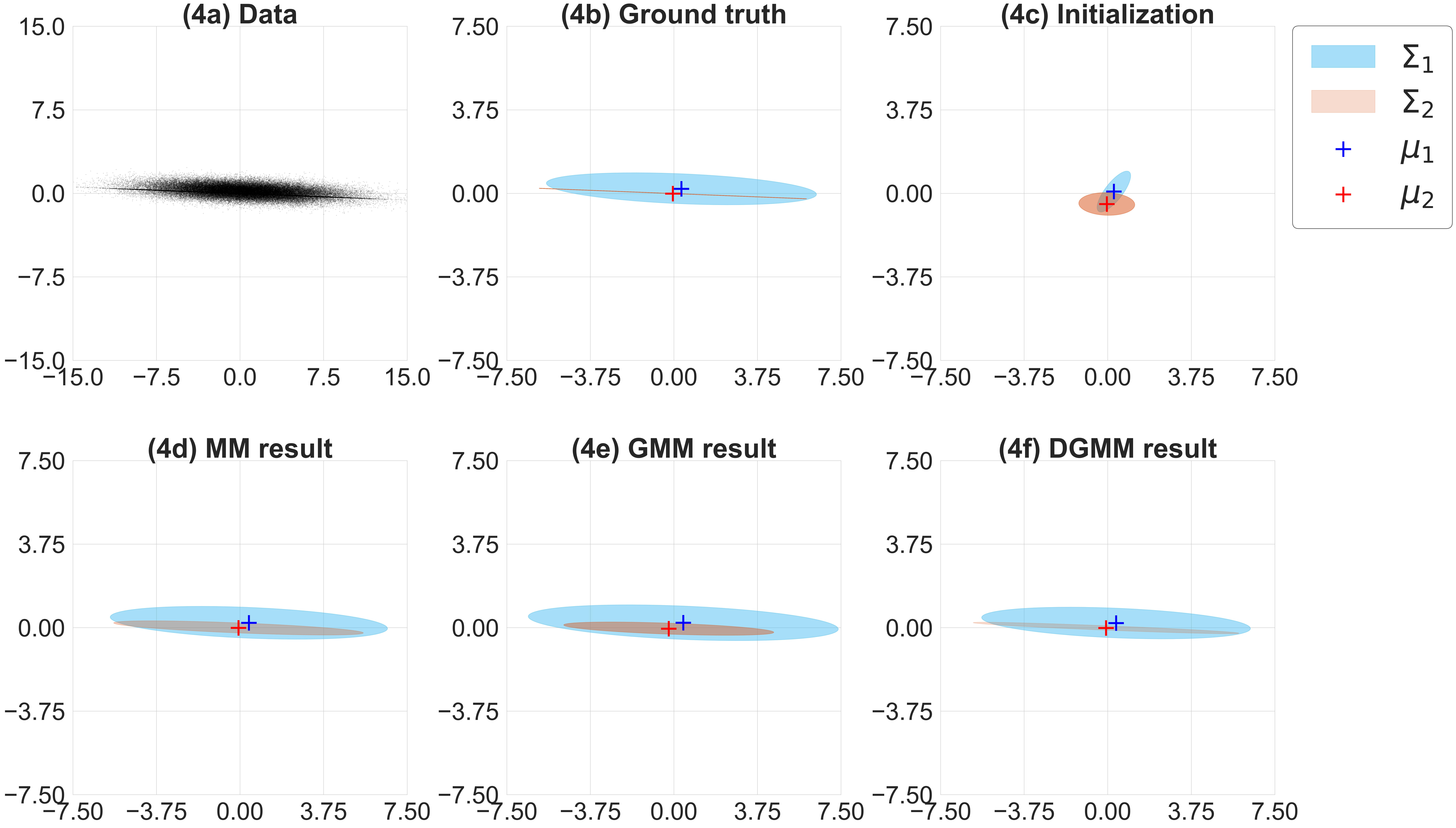}
   \caption{(4a) shows a scatter plot of the data sampled from a small-scale example of \cref{model:heteroscedastic} with $d=10, K = 2, \pi_1^{*} = 0.4, \pi_2^{*} = 0.6$ and non-identical covariance ranks $R_1 = 1, R_2 = 2,  R_{\max} =2$. (4b) shows the ground-truth GM parameters. (4c) shows  the initial GM parameters. (4d)-(4f) show the estimation results of MM, GMM, and DGMM.}\label{fig:smallexample-differentranks}
  \end{figure}
We then scale up the problem to $d = 100, K = 3, \bpi^{*} = (0.2,0.3,0.5), (R_{1},R_{2},R_{3}) = (1,3,4), R_{\max}=5, L = 4$, and the maximum number of L-BFGS iterations to $I=300$. Starting from a common random initialization and given the number of components $K=3$ and the maximum possible rank $R_{\max} = 5$ (individual ranks are not known a priori), we run DGMM and the modified MM as in the previous case, both using the implicit moment computation. \cref{tab:comparison-largeexample-differentranks,fig:largeexample-differentranks} show that DGMM yields smaller estimation errors and requires substantially fewer L-BFGS iterations and shorter average runtime than MM.
\begin{center}
\begin{table}[H]
  \small
\captionsetup{justification=justified, singlelinecheck=off}
\centering
\begin{tabular}[c]{|M{0.12\linewidth}|M{0.11\linewidth}|M{0.11\linewidth}|M{0.12\linewidth}|M{0.12\linewidth}| M{0.24\linewidth}|}
\cline{1-6}
 & \textbf{$\err_{\pi}$}&\textbf{$\err_{\mu}$} & \textbf{$\err_{\Sigma}$} &\textbf{L-BFGS iterations} & \textbf{Runtime (mean ± std. dev. of 7 runs)} \\ 
\cline{1-6}
\textbf{MM (implicit)}& 0.14727 & 0.094367 & 0.086565 &2928 & 17min 41s ± 2.4s\\ 
\cline{1-6}
\textbf{DGMM} & \textbf{0.0081969} & \textbf{0.066353} & \textbf{0.0074404} & \textbf{777}& \textbf{6min 24s ± 11.9s} \\ 
\cline{1-6}
\end{tabular}
\caption{Comparison across methods when estimating the parameters of a large-scale example of \cref{model:heteroscedastic} with non-identical covariance ranks.}
\label{tab:comparison-largeexample-differentranks}
\end{table}
\end{center}
\begin{figure}[H]
\captionsetup{justification=justified, singlelinecheck=off} 
  \centering
  \includegraphics[width=\textwidth]{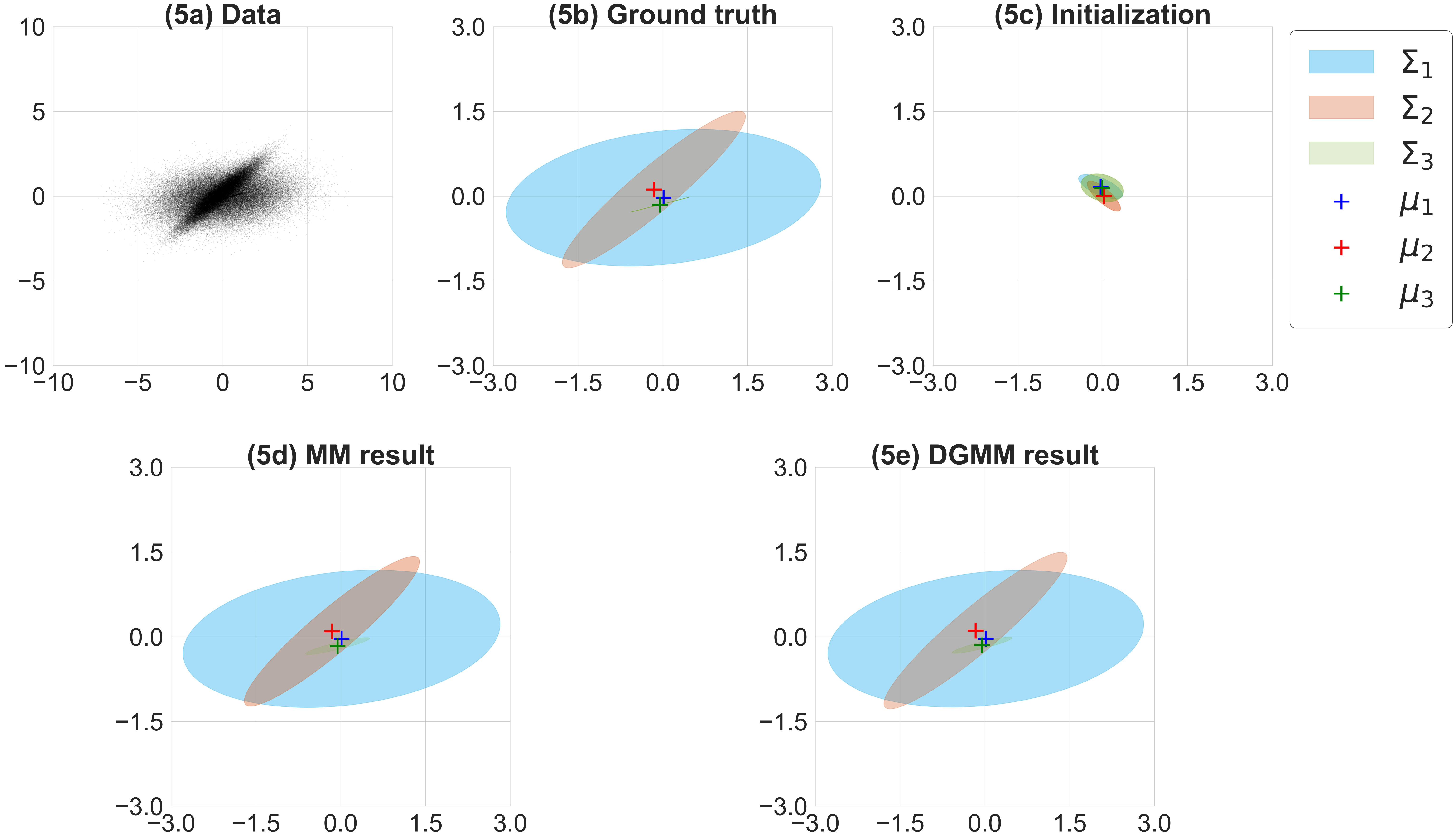}
 \caption{(5a) shows a scatter plot of the data randomly sampled from a large-scale example of \cref{model:heteroscedastic} with $d=100, K = 3, \pi_1^{*} = 0.2, \pi_2^{*} = 0.3, \pi_3^{*} = 0.5$ and non-identical covariance ranks $R_1 = 1, R_2 = 3, R_3 = 4, R_{\max} = 5$. (5b) shows the ground-truth GM parameters. (5c) shows  the initial GM parameters. (5d) and (5e) show the estimation results of MM and DGMM (both up to the highest moment order $L=4$).}\label{fig:largeexample-differentranks}
\end{figure}

\section{Conclusion}
\label{sec:conclusion}
We proposed the diagonally-weighted GMM (DGMM) to address the challenge of balancing statistical efficiency, computational complexity, and numerical stability in moment-based estimation. The key insight was to use a diagonal weighting matrix that assigns less weight to noisier sample moment conditions, thereby improving the estimation accuracy relative to the unweighted counterpart. We developed this insight further by rigorously deriving the statistical properties of the DGMM estimator in \cref{thm:statistical-properties}, showing that it is consistent, asymptotically normal, and achieves an intermediate asymptotic efficiency between MM and GMM. Notably, this weighting strategy is conceptually aligned with recent results on likelihood and moment expansion in related low-SNR problems, e.g., \cite{katsevich2023likelihood, fan2023likelihood, fan2024maximum}, reinforcing the theoretical soundness of our approach.

To a more practical end, we provided a computationally efficient and numerically stable algorithm for obtaining the DGMM estimator in the context of weakly separated heteroscedastic low-rank GMs (\cref{model:heteroscedastic}). Our algorithm leveraged two main techniques to improve the computational complexity. First, by exploiting the relation between moments, cumulants, and Bell polynomials, we avoided explicitly computing or storing the moment tensors. This significantly lowered the cost of computing weights and gradients, effectively extending the  approach of \cite{pereira2022tensor} to the settings of \cref{model:heteroscedastic} under a weighted moment-matching objective function. Second, we applied Nystr\"om approximation with $k$-means++ landmarks \cite{oglic2017nystrom} to efficiently approximate the inner-product kernel used in weight computation, with a provably bounded approximation error. We empirically validated the algorithm with numerical studies, where DGMM attained smaller estimation errors while requiring substantially shorter runtime than MM and GMM. 

While we have chosen to focus on applying DGMM for Gaussian mixture modeling due to its ubiquitous applications, the utility of DGMM extends to more general settings. In fact, the advantages of DGMM discussed in \Cref{sec:proposed-dgmom} hold for general parameter estimation problems and the statistical properties of DGMM hold as long as the model of interest satisfies the global and local identification assumptions and the standard regularity conditions\footnote{See standard references, e.g., \cite{newey1994large} for a comprehensive discussion on this.}. The algorithm for obtaining DGMM, while tailored to \cref{model:heteroscedastic}, is designed and implemented with adaptability in mind and can be modified for other models. Given its theoretical properties and empirical performance, we expect that DGMM will be useful for a broad class of statistical estimation problems, especially in challenging low-SNR regimes.

\section*{Acknowledgments} 
We would like to thank Afonso S. Bandeira, Tamir Bendory, Charles Fefferman, Marc A. Gilles, Anya Katsevich, Joe Kileel, João M. Pereira, and Liza Rebrova for helpful discussions. In particular, we thank Marc A. Gilles for suggesting the randomly pivoted Cholesky algorithm to speed up the code and João M. Pereira for helpful suggestions on numerical optimization. 


\appendix

\section{Supporting proofs for model specifications}
\label{sec:supporting-proofs-model}
Suppose the covariance of the $j$-th Gaussian component in \cref{model:heteroscedastic}, $\Sigma_j$, has the compact singular value decomposition:
\begin{align}
\label{eq:simga-compact-svd}
  \Sigma_j = U_{j} \Lambda_{j} U_{j}^T, \mkern9mu U_{j} \in \R^{d\times R_j}, \mkern9mu \Lambda_j = \diag\left({\lambda_j^{(1)}}, \dots, {\lambda_j^{(R_j)}}\right) \in \mathbb{R}^{R_j\times R_j}.
\end{align}
Due to the rank deficiency of $\Sigma_j$, the p.d.f.~of ${\bX}_j$ does not exist w.r.t.~the $d$-dimensional Lebesgue measure $\nu_d$, as shown in the following proposition:
\begin{proposition}
\label{prop:measure-d-R}
The p.d.f.~of ${\bX} \sim \cN({\bmu},\Sigma) \in \R^d$ where $\rank \Sigma = R < d$ does not exist w.r.t.~the $d$-dimensional Lebesgue measure $\nu_d$.
\end{proposition}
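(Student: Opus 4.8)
The plan is to show that the law of $\bX$ is carried by a set of $d$-dimensional Lebesgue measure zero, which by the Radon--Nikodym theorem precludes the existence of a density with respect to $\nu_d$.

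First I would establish that $\bX - \bmu \in \range(\Sigma)$ almost surely. Since $\rank \Sigma = R < d$, the null space $\ker(\Sigma) = \range(\Sigma)^{\perp}$ has dimension $d - R \geq 1$; fix any nonzero $\bm{w} \in \ker(\Sigma)$. Then $\Var(\bm{w}^T \bX) = \bm{w}^T \Sigma \bm{w} = 0$, and a random variable with zero variance equals its mean almost surely, so $\bm{w}^T(\bX - \bmu) = 0$ a.s. Applying this to each element of a basis of $\ker(\Sigma)$ shows $\bX - \bmu$ is a.s.\ orthogonal to $\ker(\Sigma)$, i.e., $\bX \in \bmu + \range(\Sigma)$ a.s.

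Next I would observe that $A \coloneq \bmu + \range(\Sigma)$ is an affine subspace of $\R^d$ of dimension $R < d$, hence $\nu_d(A) = 0$ (any proper affine subspace is Lebesgue-null, e.g.\ by Fubini after an orthogonal change of coordinates). Thus $\probP(\bX \in A) = 1$ while $\nu_d(A) = 0$, so the law $\probP_{\bX}$ of $\bX$ fails to be absolutely continuous with respect to $\nu_d$. By the Radon--Nikodym theorem, a p.d.f.\ $f = \mathrm{d}\probP_{\bX}/\mathrm{d}\nu_d$ exists if and only if $\probP_{\bX} \ll \nu_d$, so no such density exists; equivalently, if some $f \geq 0$ with $\probP_{\bX}(B) = \int_B f \, \mathrm{d}\nu_d$ existed, taking $B = A$ would give $1 = \int_A f \, \mathrm{d}\nu_d = 0$, a contradiction.

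I do not anticipate any genuine obstacle: the only point needing care is the measure-theoretic observation that carrying full mass on a $\nu_d$-null set is incompatible with having a density, which is precisely the statement of absolute continuity together with Radon--Nikodym. As a remark I would note an alternative route via the characteristic function $\varphi_{\bX}(\bm{t}) = \exp(i\,\bm{t}^T\bmu - \tfrac12 \bm{t}^T \Sigma \bm{t})$, which does not vanish as $\bm{t} \to \infty$ along directions in $\ker(\Sigma)$, so by the Riemann--Lebesgue lemma $\bX$ cannot possess an $L^1(\nu_d)$ density; I would, however, carry out the support-based argument as the main proof since it is the most elementary and makes the geometry of $\range(\Sigma)$ explicit, which is the point relevant to \eqref{eq:pdf-single-gaussian-lowrank}.
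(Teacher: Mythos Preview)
Your proposal is correct and follows essentially the same approach as the paper: show that the law of $\bX$ is supported on the proper affine subspace $\bmu+\range(\Sigma)$, which is $\nu_d$-null, and then invoke Radon--Nikodym to conclude no density exists. The only difference is that you supply a self-contained variance argument for the support claim where the paper cites classical references, and you add an optional characteristic-function remark; neither changes the substance of the argument.
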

\begin{proof}
For ${\bX} \sim \cN({\bmu},\Sigma)$ with singular $\Sigma$ , $\nu_d({\bmu} + \range(\Sigma)) =0$. For any measurable subset $A$ of the affine space ${\bmu} + \range(\Sigma)$, we have $\nu_d(A) \leq \nu_d({\bmu} + \range(\Sigma))$. Thus, $\nu_d(A) = 0$. As shown by \cite[(21.12.5) on p. 290]{cramer1946mathematical}, \cite[(8a.4.11) and (8a.4.12) on p. 527-528]{rao1973multivariate} , and \cite[p.43]{srivastava1979introduction}, if $\Sigma$ is singular, then ${\bX} \sim \cN ({\bmu}, \Sigma)$ lies on a $\rank \Sigma$-dimensional subspace of $\R^d$, almost surely. This implies that there exists some measurable subset $A \subseteq {\bmu} + \range(\Sigma)$ such that $\probP[{\bX}\in A] \neq 0$. By the Radon-Nikod\'ym theorem, the p.d.f.~of ${\bX}$ does not exist w.r.t.~$\nu_d$. 
\end{proof}
However, the p.d.f.~of ${\bX}_j$ does exist w.r.t.~the following $R_j$-dimensional measure supported on the affine space ${\bmu}_j +\range(\Sigma_j)$, as shown in the following proposition:
\begin{proposition}[\cite{khatri1968some}]\label{prop:pdf-single-gaussian-lowrank}
    Let ${\bX} \sim \cN({\bmu},\Sigma)$ with  $\rank \Sigma = R$. Suppose $\Sigma$ has the compact SVD: $\Sigma = U_R \Lambda_R U_R^T$, with $U_R \in \R^{d\times R}$ and $\Lambda_R= \diag\left({\lambda^{(1)}}, \dots, {\lambda^{(R)}}\right) \in \mathbb{R}^{R\times R}$.
Let $N \in \R^{d\times (d-R)}$ be a matrix of rank $(d-R)$ such that $N^T \Sigma = 0$. Define the affine map $\phi: \R^R \to \R^d, \phi(\cdot) \coloneqq {\bmu} + U_R (\cdot)$. Let $\nu_R: \scrB(\R^R) \to \R_{\geq 0}$ denote the $R$-dimensional Lebesgue measure, where $\scrB(\R^d)$ denotes the standard Borel algebra of $\R^d$. Then the p.d.f.~of ${\bX}$ exists w.r.t.~the following $R$-dimensional measure supported on the affine space ${\bmu}+\range(\Sigma)$:
\begin{equation}
    \rho: \scrB(\R^d)\to \R_{\geq 0}, \mkern9mu \rho \coloneqq \nu_R \circ \phi^{-1},
\end{equation}
and the p.d.f.~is given by $p_{{\bX}}(\bx) = 
\frac{\exp\left\{-\frac{1}{2} (x- {\bmu})^T \Sigma^{\dagger} (x-{\bmu})\right\}}{(2\pi)^{\frac{R}{2}} \left(\Det(\Sigma)\right)^{\frac{1}{2}}}$ for $x \in {\bmu} + \range(\Sigma))$.
\end{proposition}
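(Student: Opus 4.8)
The plan is to reduce the degenerate Gaussian $\bX$ in $\R^d$ to a genuinely non-degenerate Gaussian in $\R^R$ by passing to the orthonormal coordinates supplied by the compact-SVD factor $U_R$, and then to transport the resulting standard density back to the affine support through the change-of-variables formula for the pushforward measure $\rho = \nu_R \circ \phi^{-1}$. First I would set $\bm{Z} \coloneq U_R^T(\bX - \bmu) \in \R^R$. As an affine image of the Gaussian $\bX$, the vector $\bm{Z}$ is itself Gaussian, with mean $\bm{0}$ and covariance $U_R^T \Sigma U_R = \Lambda_R$, using $U_R^T U_R = I_R$ and $\Sigma = U_R \Lambda_R U_R^T$. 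Since $\Lambda_R$ has strictly positive diagonal entries $\lambda^{(1)}, \dots, \lambda^{(R)}$, the law $\bm{Z} \sim \cN(\bm{0}, \Lambda_R)$ is non-degenerate in $\R^R$ and has the usual density with respect to $\nu_R$. I would also record that $\bX = \bmu + U_R \bm{Z}$ almost surely: this says $U_R U_R^T(\bX - \bmu) = \bX - \bmu$, which holds because $U_R U_R^T$ is the orthogonal projection onto $\range(\Sigma)$ and $\bX - \bmu \in \range(\Sigma)$ almost surely by the supporting facts invoked in \cref{prop:measure-d-R}. Equivalently, $\bX = \phi(\bm{Z})$ for the injective affine map $\phi$.

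Next I would transport the density. Because $\phi$ has the full-column-rank linear part $U_R$, it is a bijection from $\R^R$ onto the affine space $\bmu + \range(\Sigma)$, and by definition $\rho = \nu_R \circ \phi^{-1}$ is precisely the pushforward of $\nu_R$ under $\phi$. The pushforward change-of-variables identity $\int_{\R^d} f\, d\rho = \int_{\R^R} (f \circ \phi)\, d\nu_R$, applied to $f = \mathbf{1}_A \cdot (p_{\bm{Z}} \circ \phi^{-1})$ for an arbitrary Borel set $A$, gives
\begin{equation*}
\int_A (p_{\bm{Z}} \circ \phi^{-1})\, d\rho = \int_{\phi^{-1}(A)} p_{\bm{Z}}\, d\nu_R = \probP[\bm{Z} \in \phi^{-1}(A)] = \probP[\bX \in A].
\end{equation*}
Since $A$ is arbitrary, $p_{\bm{Z}} \circ \phi^{-1}$, extended by zero off the affine support, is a Radon-Nikod\'ym derivative of the law of $\bX$ with respect to $\rho$; $\sigma$-finiteness of $\rho$ then guarantees it is \emph{the} density, unique up to $\rho$-null sets.

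It then remains to simplify $p_{\bm{Z}} \circ \phi^{-1}$ into the stated closed form. On the support one has $\phi^{-1}(\bx) = U_R^T(\bx - \bmu)$, so the quadratic form in the exponent becomes $(\bx - \bmu)^T U_R \Lambda_R^{-1} U_R^T (\bx - \bmu)$; recognizing $U_R \Lambda_R^{-1} U_R^T = \Sigma^{\dagger}$ collapses this to $(\bx - \bmu)^T \Sigma^{\dagger}(\bx - \bmu)$. For the normalizing constant, $\det \Lambda_R = \prod_{r=1}^R \lambda^{(r)} = \Det(\Sigma)$ is exactly the pseudo-determinant, producing the factor $(2\pi)^{R/2}(\Det(\Sigma))^{1/2}$ and hence precisely the claimed expression for $p_{\bX}(\bx)$.

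The main obstacle is not the algebra but the measure-theoretic bookkeeping: one must verify that $\rho$ is a well-defined $\sigma$-finite Borel measure concentrated on the affine space $\bmu + \range(\Sigma) = \{\bx : N^T(\bx - \bmu) = 0\}$, that $\phi$ is a Borel bijection onto this support with Borel-measurable inverse, and that the pushforward change-of-variables formula applies to the nonnegative integrand. Once these facts are in place, the density follows immediately from the standard non-degenerate Gaussian density for $\bm{Z}$; the matrix $N$ with $N^T \Sigma = 0$ enters only to characterize the support, namely the set on which the density is nonzero.
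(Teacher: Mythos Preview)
Your argument is correct and complete: reducing to the non-degenerate Gaussian $\bm{Z}=U_R^T(\bX-\bmu)\sim\cN(\bm{0},\Lambda_R)$, identifying $\bX=\phi(\bm{Z})$ almost surely, and then pushing the density forward along $\phi$ to obtain the Radon--Nikod\'ym derivative with respect to $\rho=\nu_R\circ\phi^{-1}$ is exactly the right approach, and the simplifications $U_R\Lambda_R^{-1}U_R^T=\Sigma^{\dagger}$ and $\det\Lambda_R=\Det(\Sigma)$ are handled cleanly.

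There is, however, nothing to compare against: the paper does not supply its own proof of this proposition. It is stated with attribution to \cite{khatri1968some} and left unproved, serving only as a supporting fact that characterizes the density in \cref{eq:pdf-single-gaussian-lowrank}. Your write-up therefore goes beyond what the paper provides. The measure-theoretic caveats you flag at the end (well-definedness and $\sigma$-finiteness of $\rho$, Borel measurability of $\phi$ and $\phi^{-1}$, applicability of the pushforward change-of-variables) are all routine for an affine isometry onto a closed affine subspace and would not need elaboration in this context.
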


\section{Supporting proofs for computational complexity and statistical properties}
\label{sec:supporting-proofs-statistical-computational}

\printProofs

\bibliographystyle{siamplain}
\bibliography{references}

\end{document}